\documentclass{tilde}
\usepackage{microtype}

\newcommand{\E}{\mathbb{E}}

\newcommand{\dd}{\mathrm{d}}
\newcommand{\softmax}{\mathrm{softmax}}

\newcommand{\brho}{\bm{\rho}}
\newcommand{\bq}{\bm{q}}
\newcommand{\bk}{\bm{k}}
\newcommand{\bv}{\bm{v}}
\newcommand{\bx}{\bm{x}}
\newcommand{\bz}{\bm{z}}
\newcommand{\bo}{\bm{o}}

\DeclareMathOperator*{\argmin}{arg\,min}

\DeclareMathOperator{\rowmax}{rowmax}
\DeclareMathOperator{\rowsum}{rowsum}

\newcommand{\papertitle}{Parallax: Parameterized Local Linear Attention for Language Modeling}
\newcommand{\paperdate}{May 2026}
\newcommand{\contactemail}{yifeizuo2029@u.northwestern.edu}
\newcommand{\codeurl}{https://github.com/yifei-zuo/Parallax}

\newcommand{\tildelogo}{%
  \includegraphics[height=1.1em]{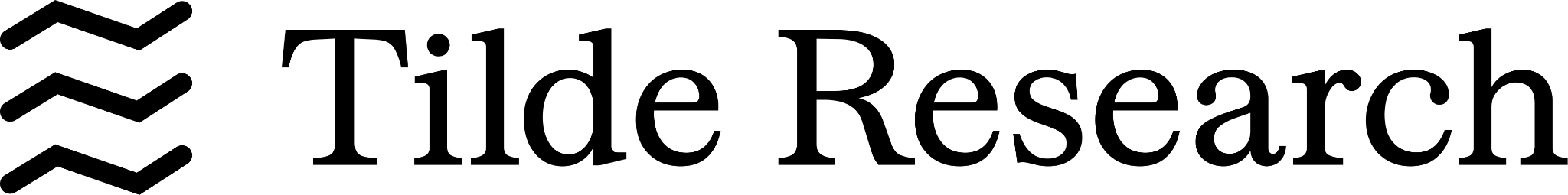}%
}
\newcommand{\northwesternlogo}{%
  \includegraphics[height=2.2em]{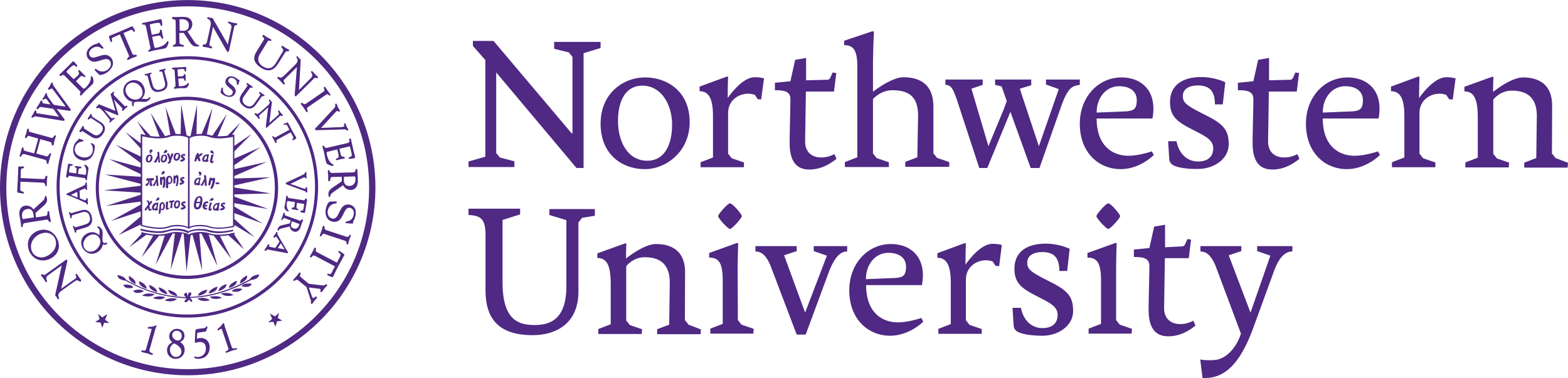}%
}
\newcommand{\uwlogo}{%
  \includegraphics[height=1.85em]{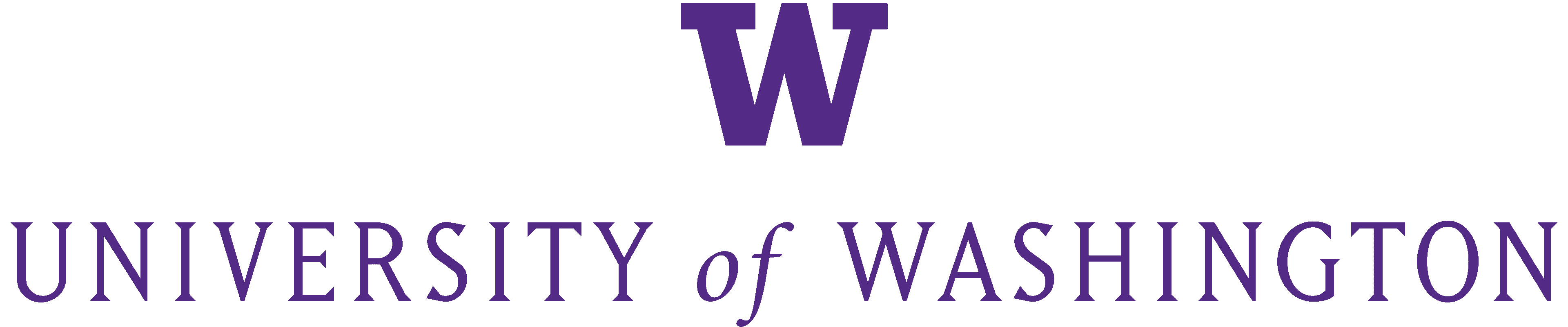}%
}

\begin{document}


\begin{herobox}
{\setlength{\parskip}{0pt}%

{\titlefont\papertitle\par}

\vskip 0.2cm

{\authorfont
Yifei Zuo\textsuperscript{1},
Dhruv Pai\textsuperscript{2},
Zhichen Zeng\textsuperscript{3},
Alec Dewulf\textsuperscript{2},
Shuming Hu\textsuperscript{2},
Zhaoran Wang\textsuperscript{1}\par}

\vskip 0.2cm

{\normalsize
\textsuperscript{1}Northwestern University,
\textsuperscript{2}Tilde Research,
\textsuperscript{3}University of Washington\par}

\vskip 0.5cm

Large Language Models (LLMs) have become the central paradigm in artificial intelligence, yet the core computational primitive of attention has remained structurally unchanged. Local Linear Attention (LLA) is an attention mechanism derived from nonparametric statistics in the test-time regression framework. In contrast to prior research on efficient attention variants, LLA upgrades the local constant estimate in softmax attention to a local linear estimate, yielding provably superior bias-variance tradeoffs for associative memory. However, LLA has not been scaled in LLM pretraining due to computational and numerical stability concerns.
We introduce \textbf{Parallax}, a \textbf{para}meterized \textbf{L}ocal \textbf{L}inear \textbf{A}ttention that is scalable for LLMs. Parallax eliminates the numerical solver in LLA and learns an extra query-like projector that probes the KV covariance.
We place Parallax within a family of attention mechanisms connected by the bandwidth, the probe construction and the affine structure.
We propose a hardware-aware algorithm that increases the arithmetic intensity over FlashAttention, shifting attention into a more compute bound regime.
Our prototype decode kernel matches or outperforms FlashAttention 2/3 across diverse batch sizes and context lengths.
We pretrain Parallax at 0.6B and 1.7B scales and find consistent perplexity improvements throughout pretraining with gains that transfer to downstream benchmarks.
The advantage persists under both parameter-matched and compute-matched controls, demonstrating a Pareto improvement.
We perform careful pretraining ablations and identify a novel phenomenon whereby Muon unlocks the capacity of Parallax. To our knowledge, this is the first empirical demonstration of strong architecture-optimizer codesign for attention mechanisms in the architecture research literature.\par

\vskip 0.5cm

{\metadatafont
\begin{tabular}{@{}ll}
\textbf{Date:} & \paperdate \\
\textbf{Code:} & \href{\codeurl}{\codeurl}\\
\textbf{Correspondence:} & \href{mailto:\contactemail}{\contactemail}\\
 & \href{mailto:dhruv@tilderesearch.com}{dhruv@tilderesearch.com}\\
 & \href{mailto:zhaoranwang@gmail.com}{zhaoranwang@gmail.com}
\end{tabular}
}

\vspace{-6.5em}
\begin{flushright}
\northwesternlogo\\[0.4em]
\uwlogo\\[0.4em]
\tildelogo
\end{flushright}

}
\end{herobox}

\vspace{2.0em}


\section{Introduction}

Large Language Models (LLMs) have become the central paradigm in artificial intelligence, powering advances in mathematical reasoning, code generation, multimodal processing and scientific discovery.
Throughout the rapid progress of LLMs, Softmax Attention \citep{vaswani2023attentionneed} has remained largely unchanged as the backbone of the Transformer architectures.
A substantial body of work has sought efficient alternatives to Softmax Attention for long-context generation.
For example, Linear Attention such as DeltaNet \citep{yang2025gateddeltanetworksimproving,yang2025parallelizinglineartransformersdelta,kimiteam2025kimilinearexpressiveefficient}, and State Space Models (SSMs) such as Mamba \citep{gu2024mambalineartimesequencemodeling} maintain constant-size recurrent states and achieve subquadratic complexity.
Despite the efficiency gains, such models consistently underperform Softmax Attention on in-context information retrieval \citep{arora2023zoologymeasuringimprovingrecall,pmlr-v267-bick25a,10.5555/3692070.3692933}, suggesting the underlying trade-off behind these design choices.
The test-time regression framework \citep{wang2025testtimeregressionunifyingframework} unifies these attention mechanism designs by interpreting them as in-context regression solvers.
Local Linear Attention (LLA) \citep{zuo2025locallinear} sharpens this perspective beyond Linear Attention by connecting the bias-variance theory with associative memory capacity, and shows that replacing the local constant estimator of Softmax Attention with a local linear estimator yields a strictly richer and more powerful predictor.

Although LLA has theoretical advantages and strong results on synthetic tasks, it has not yet been shown effective for large scale LLM pretraining.
Specifically, the per-token conjugate gradient solve introduces both computation and I/O overhead and numerical sensitivity that are difficult to manage at scale.
To bridge this gap, we propose Parallax, a parameterized LLA that preserves the local linear principle while being more efficient, scalable, and simpler to implement.
It accepts an extra \(\bm R\) matrix alongside the standard \(\bm Q,\bm K\) and \(\bm V\) matrices, and learns to probe the KV covariance to improve the prediction.
Notably, we demonstrate an \emph{optimizer-architecture} interaction that was not previously recognized, whereby the correction branch in Parallax depends strongly on the optimizer geometry.
Empirically, we find that the Muon optimizer \citep{jordan2024muon} is crucial for Parallax to demonstrate consistent improvements over Softmax Attention.

\paragraph{Contributions.}
To summarize, our contributions are:
\begin{enumerate}
    \item \textbf{Architecture. }We identify the key challenges in scaling LLA to pretraining and derive Parallax to tackle these issues.
    We provide a unified interpretation that connects nonparametric attention mechanisms to their parametric counterparts, clarifying their design tradeoffs and complexity.
    \item \textbf{Efficiency. }We analyze the I/O and compute complexity of Parallax and develop a hardware-aware streaming algorithm. Our custom decode kernel matches or outperforms FlashAttention 2/3 across a wide range of batch sizes and context lengths.
    \item \textbf{Experiment. }We validate Parallax on synthetic tasks and on LLM pretraining at 0.6B and 1.7B scales, where it consistently improves perplexity and downstream accuracy over Softmax Attention.
    The improvement persists under both parameter-matched and compute-matched controls.
    We further characterize a strong optimizer-architecture interaction where Parallax shows substantial advantage under Muon, while the two are comparable under AdamW.
\end{enumerate}

\subsection{Related Work.}
\paragraph{Efficient Attention Mechanism.}
The quadratic computation and expensive I/O in Softmax Attention~\citep{vaswani2023attentionneed} has motivated a broad search for efficient alternatives.
Linear Attention~\citep{katharopoulos2020transformers} removes the softmax operation, enabling recurrent inference with a constant-size state.
Subsequent work has enriched this family through Retention~\citep{sun2023retentivenetworksuccessortransformer}, Gating~\citep{yang2024gatedlinearattentiontransformers}, Delta-Rules~\citep{yang2025parallelizinglineartransformersdelta, yang2025gateddeltanetworksimproving} and Householder products~\citep{siems2025deltaproductimprovingstatetrackinglinear}.
Similarly, SSMs such as Mamba~\citep{gu2024mambalineartimesequencemodeling,dao2024transformersssmsgeneralizedmodels,lahoti2026mamba3} aim to parameterize linear recurrences with structured matrices for long-horizon recall \citep{gu2022efficiently,gu2022trainhippostatespace,poli2023hyenahierarchylargerconvolutional}.
FlashAttention~\citep{dao2022flashattentionfastmemoryefficientexact,dao2023flashattention2fasterattentionbetter,shah2024flashattention3} explores hardware-aware algorithm innovations, while keeping the underlying mechanism unchanged.
Sparse Softmax Attentions~\citep{yuan2025native, gao2024seerattention, lu2025moba, xiao2025statistics} and GQA, MLA \citep{ainslie2023gqa,deepseekai2024deepseekv2strongeconomicalefficient} further incorporate the I/O aware design, making the attention more efficient in practice.

\paragraph{Attention as test-time learner.}
A growing body of work shows that attention mechanisms implicitly implement optimization steps to perform in-context learning
\citep{garg2023transformerslearnincontextcase,akyurek2022learning,vonoswald2023transformerslearnincontextgradient,kirsch2024generalpurposeincontextlearningmetalearning,zhang2024trained,mahankali2024stepgradientdescentprovably,ahn2023transformerslearnimplementpreconditioned,dai2023gptlearnincontextlanguage}.
This perspective has motivated a series of attention variants designed around explicit test-time objectives, including Titans
\citep{behrouz2025titans}, MIRAS \citep{behrouz2026itsallconnected}, MesaNet~\citep{vonoswald2025mesanetsequencemodelinglocally}, and TTT~\citep{pmlr-v267-sun25h}. The test-time regression framework
\citep{wang2025testtimeregressionunifyingframework} unifies these designs by interpreting them as in-context regression solvers, from which LLA \citep{zuo2025locallinear} is derived.

\paragraph{Optimizers for LLMs.}
Adam(W) \citep{kingma2015adam,loshchilov2019decoupled} has long been the de facto choice of optimizer for all stages of the training pipeline. Subsequent work proposes optimizers that use more expressive curvature approximations~\citep{gupta2018shampoo, martens2015optimizing, vyas2024soap} but these methods have yet to gain traction, partially due to increased memory and compute costs.
Recently, Muon~\citep{jordan2024muon} has become a popular alternative to Adam(W) for optimizing matrix parameters in the hidden layers. Moonlight~\citep{liu2025muonscalable} adds RMS-matched updates and weight decay, making Muon more scalable. Dion~\citep{ahn2025dion} explores cheaper ways to orthogonalize the gradient, and methods of reducing Muon's communication cost in distributed settings. Further work explores more precise Newton-Schulz methods~\citep{amsel2025polar, grishina2025accelerating}, which have been shown to improve the downstream performance of Muon. These efforts have culminated in Muon's application to training frontier-scale models~\citep{team2026kimi, zeng2026glm}. 

\subsection{Notation}
For a matrix \(\bm X\), we denote \(\|\bm X\|_F\) the Frobenius norm, \(\|\bm X\|_2\) the spectral norm, and use \(\odot\) to denote the Hadamard product between matrices.
We use \(\texttt{srank}(\bm X)\) to denote the stable rank of \(\bm X\), defined as \(\|\bm X\|_F^2 / \|\bm X\|_2^2\).
For a vector \(\bm x\), we use \(\|\bm x\|\) to denote its Euclidean norm.
To distinguish them from variables in the main text, matrix and vector variables in algorithm descriptions are denoted by \(\mathbf{X}\) and \(\mathbf{x}\), respectively.
\section{Preliminary}
\label{sec:prelim}
\subsection{Local Linear Attention}
\paragraph{Test Time Regression. }
The test-time regression framework~\citep{wang2025testtimeregressionunifyingframework} interprets the attention mechanism as a regression solver over the KV pairs $\mathcal D_i = \{(\bk_j, \bv_j)\}_{j\le i}$.
The key vectors are treated as the training data points and value vectors are the labels.
The attention function learns to predict on the test data point \(\bq_i\).
Specifically, denote \(\mathcal F(\bq_i)\) the hypothesis space, \(\Omega(f)\) the regularization and \(w_{ij}\) the weighting factor.
The objective can be generally formulated as
\begin{align}
    \hat f(\bq_i) = \argmin_{f\in\mathcal F(\bq_i)}\Bigl\{\sum_{j\le i} w_{ij} \|f(\bk_j) - \bv_j\|^2 + {\Omega}(f)\Bigr\}.
\end{align}
Different attention designs correspond to different specifications of \emph{hypothesis spaces}, \emph{objective functions} and \emph{optimization methods}.
For example, Linear Attention family corresponds to the parametric linear estimators with \(\mathcal F = \{\bm W\bx+\bm b\}\) and context-independent weighting.
MesaNet \citep{vonoswald2025mesanetsequencemodelinglocally} chooses \(\Omega(f) = \lambda \|\bm W\|^2_F\) and solves the optimal ridge regression, while DeltaNet takes one step of stochastic gradient descent on the current KV pair without regularization.
In contrast to the parametric approaches, Softmax Attention is nonparametric.
It employs the Nadaraya-Watson (NW) estimator \citep{Nadaraya1964,Watson1964,Bierens1988} with kernel \(w_{ij}=\exp(\bq_i^\top \bk_j / h)\).
The hypothesis space simply contains constant functions \(\mathcal F(\bq_i)=\{\bm c\}\) built for each query.

These design choices, particularly the choice of hypothesis space, fundamentally impact the associative memory capacity of each mechanism.
Linear Attention suffers from the irreducible misspecification error, while Softmax Attention suffers from the boundary bias, which can be resolved by upgrading its constant function class to linear function class.
\citet{zuo2025locallinear} prove that by doing so the model can achieve strictly smaller integrated MSE.
We provide a brief review of the main results in Theorem~\ref{thm:separation}.

\begin{theorem}[Bias-variance separation \citep{zuo2025locallinear}]
\label{thm:separation}
Let $(X_i, Y_i)_{i=1}^n$ be i.i.d.\ with $X_i \in \mathbb R^d$ supported on a bounded $C^2$ domain $D$ and $Y_i = f(X_i) + \varepsilon_i \in \mathbb R^{d_y}$, $\E[\varepsilon_i \mid X_i] = 0$.
Let $\hat f_{\mathrm{GL}}$, $\hat f_{\mathrm{NW}}$, and $\hat f_{\mathrm{LL}}$ denote the Global Linear, Nadaraya--Watson (Local Constant), and Local Linear estimators with optimal bandwidths, respectively.
Under Assumptions~\ref{asm:domain}--\ref{asm:boundary}, denote $\mathcal{R}(\hat f) := \E\!\int_D \|\hat f(x) - f(x)\|^2\, dx$ the integrated mean squared error, then
\begin{equation}
    \mathcal{R}(\hat f_{\mathrm{GL}})
    \;\gg\;
    \mathcal{R}(\hat f_{\mathrm{NW}})
    \;\gg\;
    \mathcal{R}(\hat f_{\mathrm{LL}}).
\end{equation}
The lower bound for $\hat f_{\mathrm{GL}}$ holds whenever $f$ is not globally affine.
The lower bound for $\hat f_{\mathrm{NW}}$ holds whenever $f$ has sufficiently large normal gradient along $\partial D$ (Assumption~\ref{asm:boundary}).
\end{theorem}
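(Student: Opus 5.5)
The plan is to bound the integrated MSE of each estimator separately, to matching orders in $n$, and then compare the rates. The three estimators have three distinct error mechanisms: irreducible misspecification for GL, a boundary-layer bias for NW, and a uniform $O(h^2)$ bias for LL. The symbol $\gg$ is read as a strict separation of convergence rates: $\mathcal R(\hat f_{\mathrm{GL}}) = \Omega(1)$, $\mathcal R(\hat f_{\mathrm{NW}}) \asymp n^{-r_{\mathrm{NW}}}$ and $\mathcal R(\hat f_{\mathrm{LL}}) \asymp n^{-r_{\mathrm{LL}}}$ with $r_{\mathrm{LL}} > r_{\mathrm{NW}}$. Throughout, Assumptions~\ref{asm:domain}--\ref{asm:boundary} are taken to supply a density bounded above and below on $D$, $f \in C^2$, bounded conditional noise variance, and a compactly supported symmetric kernel.

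\textbf{Global Linear.} Let $f^\star = \bm A^\star x + \bm b^\star$ be the $L^2(D)$-projection of $f$ onto affine maps. For every affine $g$,
\[
\int_D \|g - f\|^2 \ge \int_D \|f - f^\star\|^2 =: c_f .
\]
Here $c_f > 0$ exactly when $f$ is not globally affine. Since $\hat f_{\mathrm{GL}}$ is affine for every realization of the data, $\mathcal R(\hat f_{\mathrm{GL}}) \ge c_f$, which is $\Omega(1)$.

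\textbf{Nadaraya--Watson lower bound.} Split $D$ into the interior $D_h = \{x : \mathrm{dist}(x, \partial D) > h\}$ and the boundary layer $D \setminus D_h$. The layer has volume of order $h$ because $\partial D$ is $C^2$.
\begin{itemize}
\item In the interior, a Taylor expansion gives bias $O(h^2)$ and variance $\asymp (nh^d)^{-1}$.
\item In the layer, write $x = y - t\nu(y)$ with $y \in \partial D$, unit outer normal $\nu(y)$, and $t \in [0, h]$. The kernel mass is truncated to one side, so the first kernel moment no longer vanishes. The bias is then $h\, \mu_1(t/h)\, \partial_\nu f(y) + O(h^2)$, where $\mu_1(s) \neq 0$ for $s < 1$.
\end{itemize}
Integrating the squared bias over the layer gives
\[
h^3 \int_{\partial D} \|\partial_\nu f\|^2 \, d\sigma \int_0^1 \mu_1(s)^2 \, ds .
\]
Assumption~\ref{asm:boundary} (a sufficiently large normal gradient) guarantees that this term is of exact order $h^3$ and dominates the interior contribution $h^4$. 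Hence
\[
\mathcal R(\hat f_{\mathrm{NW}}) \gtrsim h^3 + (nh^d)^{-1} \quad \text{for every } h .
\]
Minimizing over $h$ gives $\mathcal R(\hat f_{\mathrm{NW}}) \gtrsim n^{-3/(d+3)}$.

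\textbf{Local Linear upper bound.} Write the estimator as a weighted least-squares fit. The local affine fit reproduces affine functions exactly, including near the boundary: the local design matrix stays uniformly invertible there, because the density is bounded below and a $C^2$ domain satisfies an interior cone condition. The bias is therefore $O(h^2)$ uniformly on $D$ and the variance is $O((nh^d)^{-1})$, so
\[
\mathcal R(\hat f_{\mathrm{LL}}) \lesssim h^4 + (nh^d)^{-1} .
\]
Optimizing over $h$ gives $\mathcal R(\hat f_{\mathrm{LL}}) \lesssim n^{-4/(d+4)}$.

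\textbf{Comparison.} Since $4/(d+4) > 3/(d+3)$ for every $d \ge 1$, we get $\mathcal R(\hat f_{\mathrm{LL}}) / \mathcal R(\hat f_{\mathrm{NW}}) \to 0$ and $\mathcal R(\hat f_{\mathrm{NW}}) / \mathcal R(\hat f_{\mathrm{GL}}) \to 0$, which is the chain in Theorem~\ref{thm:separation}.

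\textbf{Main obstacle.} The hardest step is making the NW boundary bias rigorous as a lower bound valid for \emph{all} bandwidths $h$, not just an asymptotic expansion. This requires three things:
\begin{itemize}
\item controlling the random denominator of NW, handled by conditioning on the event that the local kernel mass is comparable to its mean, an event of overwhelming probability when $nh^d \to \infty$;
\item handling the curvature of $\partial D$ in the tubular-coordinate change of variables;
\item treating the degenerate regimes $nh^d = O(1)$, where the variance term alone already exceeds $n^{-3/(d+3)}$.
\end{itemize}
The analogous uniform invertibility of the local design matrix for LL near $\partial D$ is the secondary technical point.
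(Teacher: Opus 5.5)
Your overall route matches the one the paper sketches in its appendix, with details deferred to \citet{zuo2025locallinear}. You use the $L^2(D)$ projection theorem for $\hat f_{\mathrm{GL}}$, an $O(\|H\|^{1/2})=O(h)$ bias on a boundary layer of width $O(h)$ for $\hat f_{\mathrm{NW}}$, and a uniform $O(\|H\|)=O(h^2)$ bias for $\hat f_{\mathrm{LL}}$, then compare $n^{-3/(d+3)}$ with $n^{-4/(d+4)}$. The GL and LL parts are fine.

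The gap is in the NW lower bound. You carry it out for a scalar, isotropic bandwidth, but Assumption~\ref{asm:kernel} allows a bandwidth matrix $H=h^2B$ with $\kappa(B)\le\kappa_1$. ``Optimal bandwidth'' lets NW choose $B$ as well as $h$, so your bound must hold uniformly over all admissible $B$. For anisotropic $B$, the truncated first moment at $x=y+te(y)$ (with $e$ the inward normal) points along $Be$, not along $e$. After the substitution $u=hB^{1/2}v$ the half-space becomes $\{v:(B^{1/2}e)^\top v\ge -t/h\}$, and radial symmetry then gives a leading bias proportional to
\[
h\,\frac{\nabla f(y)^\top B\,e(y)}{\|B^{1/2}e(y)\|}.
\]
This mixes $\partial_e f$ with the tangential gradient $\nabla_T f$. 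Your formula $h\mu_1(t/h)\partial_\nu f(y)$ covers only the case $B\propto I$, and your argument never uses the tangential bound $M$ or $\kappa_1$. With a large tangential gradient, an anisotropic $B$ can make $\nabla f^\top Be$ vanish on $\Gamma$ and remove the $h^3$ term.

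This is what the $m$--$M$ compatibility condition in Assumption~\ref{asm:boundary} is for. It is also why the statement asks for a ``sufficiently large'' normal gradient rather than merely a nonzero one. The repair is short. Integrate only over $\Gamma$, and write $(I-ee^\top)Be=(I-ee^\top)(B-cI)e$ with $c=\tfrac12(\lambda_{\max}(B)+\lambda_{\min}(B))$. This gives, on $\Gamma$,
\[
|\nabla f(y)^\top Be|\;\ge\;\lambda_{\min}(B)\,|\partial_e f(y)|-\tfrac12\bigl(\lambda_{\max}(B)-\lambda_{\min}(B)\bigr)\|\nabla_T f(y)\|\;\ge\;\lambda_{\min}(B)\Bigl(m-\tfrac{\kappa_1-1}{2}M\Bigr).
\]
Under a compatibility condition of this form, the leading boundary bias stays of exact order $h$ uniformly over admissible $B$, once the scale of $B$ is absorbed into $h$.

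A smaller point: minimizing $h^3+(nh^d)^{-1}$ over $h$ needs a variance lower bound $\gtrsim(nh^d)^{-1}$. That requires $\sigma^2$ to be bounded away from zero on a set of positive measure. You assumed only an upper bound on the noise variance, so state this assumption explicitly.
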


\paragraph{Local Linear Attention. }LLA fits a local linear estimator \(f\in \mathcal F(\bq_i)=\{\bm b+\bm W(\bx-\bq_i)\}\) equipped with kernel weight \(w_{ij}=\exp(\bq_i^\top \bk_j/h)\) and ridge regularization \(\lambda\|\bm W\|_F^2\). Let \(\bz_{ij}=\bk_j-\bq_i\), \(\omega_i=\sum_{j\le i}w_{ij}\), \(\bm\mu_i=\sum_{j\le i}w_{ij}\bz_{ij}\), and \(\bm\Sigma_i=\sum_{j\le i}w_{ij}\bz_{ij}\bz_{ij}^{\top}+\lambda \bm I\),
LLA is the prediction of local linear estimator at \(\bq_i\):
\begin{align}
    \bo_i^{\textsf{LLA}}=\hat f_{\text{LL}}(\bq_i)=\sum_{j\le i}\frac{w_{ij}(1-\bz_{ij}^{\top}\brho_i^\star)}{\omega_i-\bm\mu_i^{\top}\brho_i^\star} \bv_j,\quad \brho_i^\star=\bm\Sigma_i^{-1}\bm\mu_i.
    \label{eq:lla_forward}
\end{align}
Intuitively, LLA is a query-centered second-order correction to Softmax Attention leveraging the geometry of key vectors around the query.
It provides a better prediction when the keys are not uniformly distributed under the softmax geometry.
LLA can also be interpreted as constructing query-dependent states through the kernel, in contrast to the global states in MesaNet.
As shown in Figure~\ref{fig:connections}, LLA can degenerate to both mechanisms by tuning \(\lambda\) and \(h\):
\begin{align}
    \bo_i^{\textsf{LLA}} \xrightarrow{\lambda\to\infty}\; \bo_i^{\textsf{SA}}& = \softmax(\bq_i^\top \bk_j / h)\,\bv_j,\\
    \bo_i^{\textsf{LLA}} \xrightarrow[\text{drop intercept}]{h\to\infty} \bo_i^{\textsf{Mesa}} = &\biggl(\sum_{j\le i} \bv_j\bk_j^\top\biggr) \biggl(\sum_{j\le i} \bk_j\bk_j^\top + \lambda \bm I\biggr)^{-1} \bq_i.
    \label{eq:lla_degenerate}
\end{align}
\paragraph{Challenges for LLM training with LLA. }
Despite its appealing theoretical properties and empirical advantages in synthetic tasks, LLA faces several challenges when scaled up to realistic language model training.
In particular, the exact LLA forward requires solving a linear system \(\bm\Sigma_i \bx = \bm\mu_i\) for every query with a parallel conjugate gradient (CG) solver. It introduces several practical issues:
\begin{itemize}
    \item \emph{Intensive I/O. }The CG iteration requires \(TLd\) memory access in the forward pass, dominating the \(2Ld\) memory access of Softmax Attention.
    \(1\le T \le d\) is the iteration number.
    \item \emph{Regularization-expressiveness tradeoff. }Large \(\lambda\) ensures \(\bm\Sigma_i \succ 0\) but drives \(\brho_i^\star \to \bm 0\), making LLA degenerate to Softmax Attention; small \(\lambda\) enables more expressiveness but risks ill-conditioning and instability.
    We find it nontrivial to balance the tradeoff in practical pretraining settings.
    \item \emph{Low-precision incompatibility. }The stability of CG is sensitive to the precision format,
    while modern hardware and computation primitives are increasingly shaped around reduced precision.
\end{itemize}

\subsection{Muon Optimizer}
Muon is a novel optimizer for matrix parameters in the hidden layers. For a weight matrix \(\bm W_t\in\mathbb R^{m\times n}\) with gradient \(\bm G_t=\nabla_{\bm W} L(\bm W_t)\), Muon maintains a momentum buffer \(\bm B_t= \beta \bm B_{t-1}+\bm G_t\) with \(\bm B_0=0\).
Letting the singular value decomposition (SVD) of \(\bm B_t\) be \(\bm B_t=\bm U_t \bm S_t \bm V_t^\top\), Muon forms the polar factor \(\mathrm{polar}(\bm B_t)=\bm U_t\bm V_t^\top\), which is the nearest semi-orthogonal matrix in the Frobenius norm, and updates $\bm W_t$ according to:
\begin{align}
    \bm W_{t+1}=\bm W_t-\eta_t \bm U_t\bm V_t^\top.
    \label{eq:muon_update}
\end{align}

Note for clarity, weight decay is omitted.

Computing $\bm U_t$ and $\bm V_t$ via SVD is prohibitively expensive. In practice, \(\bm U_t\bm V_t^\top\) is approximated by Newton--Schulz iterations with precisely tuned matrix polynomials. These methods avoid a full SVD and can converge to a precise estimate of the polar factor in just a small number of steps~\citep{jordan2024muon,liu2025muonscalable}. This approach has the added benefit of exploiting fast GEMM subroutines on GPUs, making Muon hardware-aligned and feasible to use at scale. 

\citet{bernstein2024oldoptimizernewnorm} interpret the Muon update as steepest descent under the operator norm \(\|\cdot\|_{\ell_2 \to \ell_2}\), which for matrices coincides with the spectral norm. The polar factor has all singular values equal to one, and so Muon's updates are guaranteed to have condition number of exactly one. Previous work has shown this strong conditioning of updates results in the underlying weight matrices themselves becoming better conditioned~\citep{boreiko2025towards,wang2026muon}. By contrast, matrices trained with AdamW, can exhibit spectral collapse \citet{arefin2026learning} whereby their effective rank shrinks rapidly over training. SignSGD and Adam can be interpreted as steepest descent under \(\|\cdot\|_{\ell_1\to\ell_\infty}\) geometry instead.


\section{Parallax Mechanism}

\subsection{Parameterized Local Linear Attention}
We first reformulate LLA as applying an additive correction to Softmax Attention with a projected KV covariance.
Write \(\bm p_i=\softmax(\bm K_i\bq_i/h)\), \(t_{ij} = \brho_i^{\star\top} \bz_{ij}\) and \(\bar t_{i} = \E_{\bm p_i}[t_{ij}]\).
The equation~\eqref{eq:lla_forward} can be rewritten as
\begin{align}
    \bo^{\textsf{LLA}}_i = \sum_{j\le i}\frac{w_{ij}(1-t_{ij})}{\sum_{j\le i}w_{ij}(1-t_{ij})}\bv_j = \bo^{\textsf{SA}}_i - (1+\eta_i)\bm \Sigma_{KV}^{(i)} \brho_i^\star,\quad \brho_i^\star = \bm\Sigma_i^{-1}\bm\mu_i,
    \label{eq:parameterized_lla}
\end{align}
where \(\bm\Sigma_{KV}^{(i)} = \E_{\bm p_i}[(\bv_j - \bar\bv_i)(\bk_j - \bar\bk_i)^\top]\), \(\bar \bv_i = \E_{\bm p_i}[\bv_j]\), \(\bar\bk_i = \E_{\bm p_i}[\bk_j]\) and \(\eta_i = \bar t_i / (1-\bar t_i)\) is the \emph{boundary amplification} factor.
By Proposition~\ref{prop:nonnegative_eta}, \(\eta_i\) is non-negative and quantifies the Mahalanobis distance from the query to the key center under \(\bm p_i\).
Intuitively, if \(\eta_i\approx 0\), the query is close to the weighted key center and the correction becomes pure covariance;
if \(\eta_i\gg 1\), the query is close to the weighted key boundary and the correction is amplified to compensate for the boundary bias.
\begin{proposition}[Boundary amplification is non-negative]
    Denote \(\bar z_{i} = \E_{\bm p_i}[\bz_{ij}]\) and \(\bm A_i = \omega_i\mathrm{Var}_{\bm p_i}(\bz_{ij}) + \lambda \bm I\succ 0\). If \(\brho_i^\star = \bm\Sigma_i^{-1}\bm\mu_i\), then \(\bar t_i = \E_{\bm p_i}[\brho_i^{\star\top} \bz_{ij}] \in [0, 1)\), \(\eta_i = \omega_i\bar\bz_i^\top \bm A_i^{-1}\bar \bz_i \ge 0\) where \(\bar z_i = \E_{\bm p_i}[\bz_{ij}]\).
    The proof is provided in Appendix~\ref{app:derivation_nonnegative_eta}.
\label{prop:nonnegative_eta}
\end{proposition}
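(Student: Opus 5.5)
The plan is to rewrite $\bm\Sigma_i$ as a rank-one perturbation of $\bm A_i$ and then apply the Sherman--Morrison formula. Since $\bm\mu_i = \omega_i\bar\bz_i$, the weighted second moment splits into a variance term and a mean term:
\begin{align}
\sum_{j\le i} w_{ij}\bz_{ij}\bz_{ij}^\top = \omega_i\mathrm{Var}_{\bm p_i}(\bz_{ij}) + \omega_i\bar\bz_i\bar\bz_i^\top .
\end{align}
This gives
\begin{align}
\bm\Sigma_i = \bm A_i + \omega_i\bar\bz_i\bar\bz_i^\top .
\end{align}
Here $\bm A_i\succ 0$ because the variance is positive semidefinite and $\lambda>0$.

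Next, write $s_i := \omega_i\bar\bz_i^\top\bm A_i^{-1}\bar\bz_i\ge 0$. Sherman--Morrison applied to $\bm\Sigma_i^{-1}\bar\bz_i$ gives
\begin{align}
\bm\Sigma_i^{-1}\bar\bz_i = \frac{\bm A_i^{-1}\bar\bz_i}{1+s_i}.
\end{align}
It follows that
\begin{align}
\brho_i^\star = \bm\Sigma_i^{-1}\bm\mu_i = \frac{\omega_i\bm A_i^{-1}\bar\bz_i}{1+s_i}.
\end{align}
Because $t_{ij}$ is linear in $\bz_{ij}$, its expectation is $\bar t_i = \brho_i^{\star\top}\bar\bz_i$. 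Substituting the expression for $\brho_i^\star$ yields
\begin{align}
\bar t_i = \frac{s_i}{1+s_i}.
\end{align}
Since $s_i\ge 0$, this lies in $[0,1)$. Then $1-\bar t_i = 1/(1+s_i)$, so
\begin{align}
\eta_i = \frac{\bar t_i}{1-\bar t_i} = s_i = \omega_i\bar\bz_i^\top\bm A_i^{-1}\bar\bz_i\ge 0,
\end{align}
which is the claimed formula.

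The only step needing care is the first one: the identity $\sum_j w_{ij}\bz_{ij}\bz_{ij}^\top=\omega_i(\mathrm{Var}+\bar\bz\bar\bz^\top)$ requires the variance to be taken under the normalized weights $\bm p_i = w_{i\cdot}/\omega_i$. We also need $\omega_i>0$, which holds since $w_{ij}=\exp(\cdot)>0$. Once that normalization is fixed consistently, the rest is routine. The interpretation as a Mahalanobis distance follows directly, since $\eta_i$ is the squared distance of $\bq_i$ from $\bar\bk_i$ (because $\bar\bz_i=\bar\bk_i-\bq_i$) in the metric $\omega_i\bm A_i^{-1}$.
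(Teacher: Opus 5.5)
Your proposal is correct and matches the paper's proof step for step. Both write $\bm\Sigma_i = \bm A_i + \omega_i\bar\bz_i\bar\bz_i^\top$ via the variance identity, apply Sherman--Morrison to obtain $\brho_i^\star = \omega_i\bm A_i^{-1}\bar\bz_i/(1+s_i)$, and then read off $\bar t_i = s_i/(1+s_i)\in[0,1)$ and $\eta_i = s_i \ge 0$.
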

\paragraph{Parallax formulation. }
Building on the above reformulation, Parallax eliminates the per-query solve of \(\brho_i^\star\) by learning a direct mapping from the layer input.
Let \(\bx_i\) be the input to the layer, we parameterize \(\brho_i = \bm W_R \bx_i\) where \(\bm W_R \in \mathbb R^{d_{\text{qk}}\times d}\) is a learnable projection matrix.
Parallax additionally sets \(\eta_i = 0\) to remove the boundary amplification, yielding the forward equation
\begin{align}
    \bo^{\textsf{PLX}}_i =\sum_{j\le i}\frac{w_{ij}(1-t_{ij}+\bar t_i)}{\sum_{j'\le i}w_{ij'}(1-t_{ij'}+\bar t_i)}\bv_j= \bo^{\textsf{SA}}_i - \bm \Sigma_{KV}^{(i)} \brho_i,\quad \brho_i = \bm W_R \bx_i.
    \label{eq:parallax_forward}
\end{align}
Removing \(\eta_i\) is necessary because the parameterized \(\brho_i = \bm W_R \bx_i\) is no longer the constrained solution of the exact LLA.
Once that structure is broken, the mean score \(\bar t_i\) no longer admits its geometric interpretation and can take unbounded values. The scaling factor \(1/(1-\bar t_i)\) can therefore diverge as \(\bar t_i\to 1\) or flip sign when \(\bar t_i > 1\), causing training instability.
Equivalently, setting \(\eta_i=0\) corresponds to replacing \(t_{ij}\) with the centered statistics \(t_{ij} - \bar t_i\) in the scoring form of equation~\eqref{eq:parameterized_lla}.
The denominator in the scoring form of equation~\eqref{eq:parallax_forward} reduces to \(\omega_i\), which is bounded away from zero with the safe softmax implementation \citep{milakov2018onlinenormalizercalculationsoftmax}.

\begin{figure}
    \centering
    \includegraphics[width=0.75\textwidth]{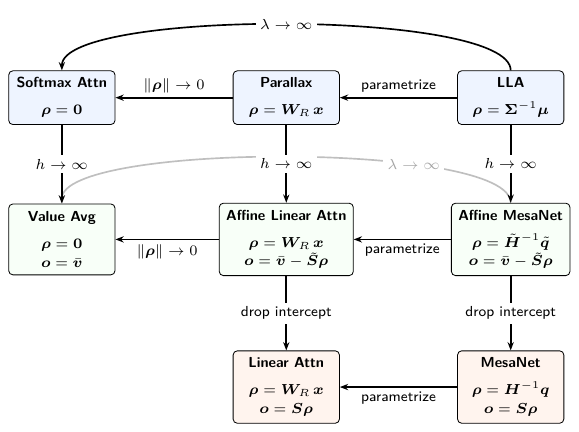}
    \caption{
        A family of attention mechanisms and their relationship. Rows are softmax weighted (top), uniform weighted with intercept (middle), and uniform weighted without intercept (bottom);
        Columns differs in how the probe \(\brho_i\) is obtained: zero (left), parametric (middle) and solved (right).
    }
    \label{fig:connections}
\end{figure}

\subsection{Connection to Other Attention Mechanisms}
\label{sec:connections}
To position Parallax relative to other attention mechanisms, we examine the wide bandwidth limit $h \to \infty$ and strong regularization limit $\lambda\to\infty$ (equivalently $\|\brho_i\|\to 0$).
We use the uniform-weighted running averages and second moments
\begin{align}
    \bar\bv_i = \tfrac{1}{i}\sum\nolimits_{j\le i} \bv_j,
    \quad
    \bar\bk_i = \tfrac{1}{i}\sum\nolimits_{j\le i} \bk_j,
    \quad
    \bm S_i = \tfrac{1}{i}\sum\nolimits_{j\le i} \bv_j \bk_j^\top,
    \quad
    \bm H_i = \tfrac{1}{i}\sum\nolimits_{j\le i} \bk_j \bk_j^\top + \lambda \bm I,
\end{align}together with the centered correspondences
\begin{align}
    \tilde \bq_i = \bar \bk_i - \bq_i,\qquad \tilde{\bm S}_i = \bm{S}_i - \bar \bv_i \bar\bk_i^\top,\qquad \tilde{\bm H}_i = \bm{H}_i - \bar\bk_i \bar\bk_i^\top.
\end{align}The connections are summarized in Figure~\ref{fig:connections}.
\paragraph{Wide bandwidth limit. }As \(h\to\infty\), the kernel weight $w_{ij} = \exp(\bq_i^\top \bk_j / h) \to 1$ uniformly in $j$, so the softmax weight degenerates to the uniform distribution $p_{ij} \to 1/i$, and $\omega_i \to i$.
Local softmax-weighted statistics become global running averages, and the three nonparametric mechanisms reduce to the \emph{affine} variants with recurrent states given by the uniform averages,
\begin{align}
\bo_i^{\textsf{SA}}  &\xrightarrow{h \to \infty} \bar\bv_i, &\brho_i &= \bm{0} &&\text{(Value Averaging)}, \\
\bo_i^{\textsf{PLX}} &\xrightarrow{h \to \infty} \bar\bv_i - \bm{\tilde S}_i \brho_i, &\brho_i &= \bm{W}_R \bx_i &&\text{(Affine Linear Attention)}, \\
\bo_i^{\textsf{LLA}} &\xrightarrow{h \to \infty} \bar\bv_i - \bm{\tilde S}_i \brho_i^\star, &\brho_i^\star &= \tilde{\bm{H}}_i^{-1} \tilde \bq_i &&\text{(Affine MesaNet)}.
\end{align}
All three share the output template and differ only in the probe. This template is the empirical OLS regression of $\bv$ on $\bk$ {with intercept} $\bar\bv_i$, evaluated at the query. The three mechanisms correspond to evaluating it through a zero, learnable, and fully solved $\brho_i$ respectively.
We refer to the corresponding attention mechanisms as Value Averaging, Affine Linear Attention and Affine MesaNet.

The standard forms of Linear Attention and MesaNet drop the intercept.
Algebraically, this is the same as setting $\bar\bv_i = \bar\bk_i = \bm{0}$ in the affine forms above, which collapses the centered moments to their raw counterparts.
The framework also clarifies the dual role of the query across the family. In nonparametric mechanisms, \(\bq_i\) shapes the kernel weights, defining {where} attention concentrates.
In the Linear Attention family, what is conventionally considered the query is in fact the probe \(\brho_i\), a directional readout from the recurrent state that can be completely determined by other statistics as in MesaNet or LLA.

\paragraph{Strong regularization limit.}
As \(\lambda\to\infty\) or \(\|\brho_i\|\to 0\), the probe term is suppressed and the intercept dominates the output.
Parallax and LLA degenerate to Softmax Attention, while Affine MesaNet and Affine Linear Attention degenerates to the Value Averaging mechanism.
Under this limit, the Linear Attention and MesaNet reduce to nothing for the whole term vanishes.
The same parametrization axis explains the relationship between Parallax and LLA, just as MesaNet differs from Linear Attention by probe preconditioning.

\paragraph{Magnitude tension in the affine structure.}\label{para:additive_structure}
Parallax and Affine Linear Attention inherit an additive structure in which the output is a sum of an intercept and a linear evaluation through $\brho_i$.
Since the probe $\brho_i = \bm{W}_R \bx_i$ is parametric rather than an optimal solve, the strength of the linear evaluation relative to the intercept is not guaranteed.
Directionally, only the component of $\brho_i$ aligned with the exact solve $\brho_i^\star$ ($\bm{\Sigma}_i^{-1}\bm{\mu}_i$ for Parallax, $\tilde{\bm H}_i^{-1}\tilde\bq_i$ for Affine Linear) remains functional.
The orthogonal component is unidentifiable and does not contribute toward the correction. Likewise, the norm of the probe no longer respects $\|\brho_i^\star\|$.
In contrast, the magnitude of the intercept term only depends on the weighted averages of $\bv$.

As a result, a poorly aligned or norm-suppressed probe vector renders the covariance correction term functionally inert in prediction, and Parallax collapses in effect toward its Softmax Attention baseline regardless of the affine structure nominally available.
Both the alignment and the norm of the probe depend heavily on optimizer choice, which we analyze empirically in Section~\ref{sec:analysis}.

\subsection{Streaming Algorithm}
\label{sec:streaming}

\begin{figure}
\centering
\begin{subfigure}[c]{0.49\textwidth}
    \centering
    \includegraphics[width=\linewidth]{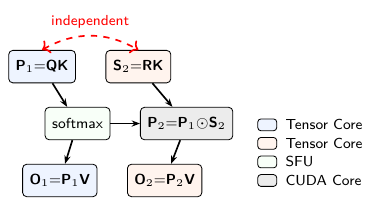}
    \caption{Dependency graph and hardware mapping.}
    \label{fig:dependency}
\end{subfigure}
\hfill
\begin{subfigure}[c]{0.48\textwidth}
    \centering
    \includegraphics[width=\linewidth]{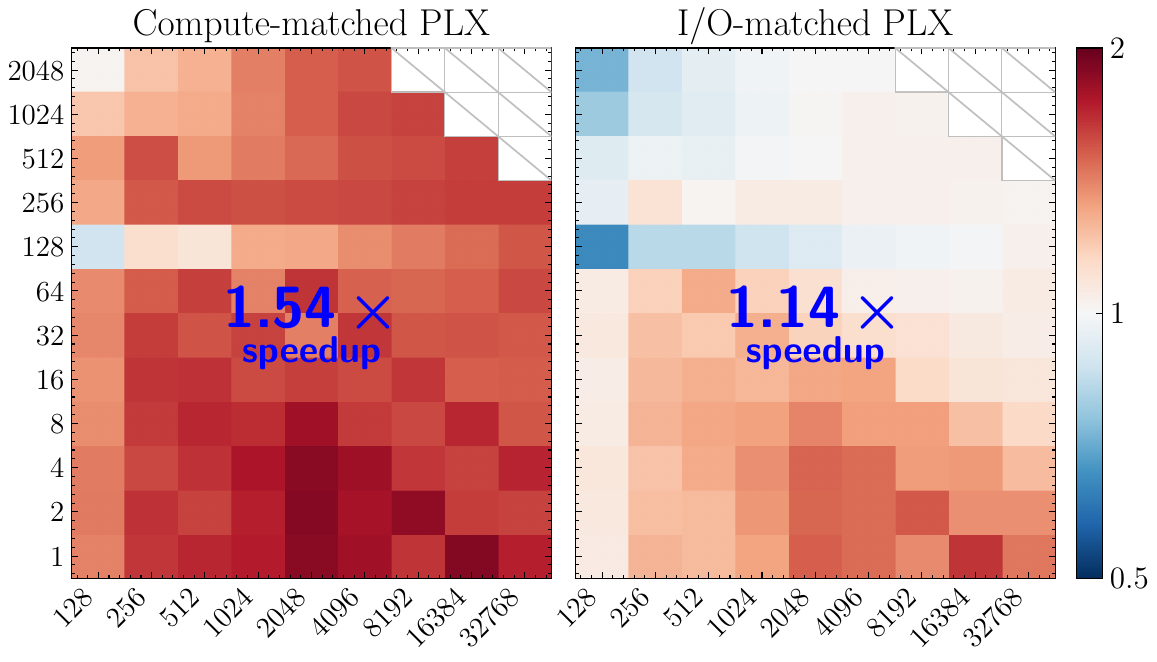}
    \caption{PLX-\texttt{CuTeDSL} vs \texttt{best}(FA2,FA3) in decoding.}
    \label{fig:speedup-heatmap}
\end{subfigure}
\caption{
    Figure~\ref{fig:dependency}: Operator dependency and hardware unit assignment for the Parallax forward.
    Figure~\ref{fig:speedup-heatmap}: Decoding speedup of Parallax kernels in I/O matched and compute-matched setting. X-axis is the context length and Y-axis is the batch$\times$head dimension. The color indicates the latency ratio of the \texttt{best}(FA2,FA3) over Parallax-\texttt{CuTeDSL} kernel, with warmer colors indicating faster decoding with Parallax.
    The upper left tiles with backslash indicates OOM in profiling.
}
\label{fig:streaming-perf}
\end{figure}

Parallax inherits the streaming structure of FlashAttention (FA)~\citep{dao2022flashattentionfastmemoryefficientexact,dao2023flashattention2fasterattentionbetter,shah2024flashattention3} with one additional covariance branch.
In order to stream the computation of equation~\eqref{eq:parallax_forward} in one pass over the KV sequence, we expand the formulation to the following equivalent form,
\begin{align}
    \bo_i^{\textsf{PLX}} = \Bigl(\sum\nolimits_{j\le i}p_{ij}\bv_j\Bigr)\cdot \Bigl(1+\sum\nolimits_{j\le i}p_{ij}\cdot \bk_j^\top\brho_i\Bigr) -\sum\nolimits_{j\le i}\bigl(p_{ij}\cdot \bk_j^\top\brho_i\bigr)\bv_j,
    \label{eq:parallax_expand}
\end{align}where \(p_{ij}\) is the softmax score and \(p_{ij}\cdot \bk_j^\top\brho_i\) is the composite score.
The computation can be implemented with two parallel scoring and accumulation branches.

\begin{wrapfigure}[17]{r}{0.47\textwidth}
\begin{minipage}{\linewidth}
\vspace{-22pt}
\begin{algorithm}[H]
\caption{Parallax forward core computation.}
\label{alg:parallax_forward}
\begin{algorithmic}[1]
\Require Input $\mathbf Q_r, \mathbf R_r, \mathbf K, \mathbf V, L, s$; Output $\mathbf O_r$; Running state $(\mathbf m_r, \mathbf d_{1,2}, \mathbf O_{1,2})$.
\For{$c=1$ to $\lceil L/\mathcal B_c\rceil$}
\State $\mathbf S_1 \gets \mathbf Q_r \mathbf K_c^\top \cdot s${\small\color{gray}\Comment{Apply Masking}}
\State $\mathbf m \gets \max(\mathbf m_r, \rowmax(\mathbf{S}_1))$
\State $\bm \alpha \gets \mathrm{exp2}(\mathbf m_r - \mathbf m)$
\State $\mathbf P_1 \gets \mathrm{exp2}(\mathbf{S}_1 - \mathbf m)$
\State $\mathbf m_r \gets \mathbf m$
\State {\color{red}$\mathbf S_2 \gets \mathbf R_r \mathbf K_c^\top$}{\small\color{gray}\Comment{GEMM in TC}}
\State {\color{red}$\mathbf P_2 \gets \mathbf P_1 \odot \mathbf S_2$}
\State $\mathbf d_1 \gets \bm \alpha \mathbf d_1 + \rowsum(\mathbf P_1)$
\State {\color{red}$\mathbf d_2 \gets \bm \alpha \mathbf d_2 + \rowsum(\mathbf P_2)$}
\State $\mathbf O_1 \gets \bm \alpha \mathbf O_1 + \mathbf P_1 \mathbf V_c$
\State {\color{red}$\mathbf O_2 \gets \bm \alpha \mathbf O_2 + \mathbf P_2 \mathbf V_c$}{\small\color{gray}\Comment{GEMM in TC}}
\EndFor
\State $\mathbf O_r \gets \mathbf O_1 / \mathbf d_1 \cdot {\color{red}(1 + \mathbf d_2 / \mathbf d_1) - \mathbf O_2/\mathbf d_1}$
\end{algorithmic}
\end{algorithm}
\end{minipage}
\end{wrapfigure}
Let \(\mathbf{Q}_r, \mathbf{R}_r\) denote the tiled matrices for a row block of \(\bq\) and \(\brho\) of size \(\mathcal{B}_r\), and \(\mathbf{K}_c, \mathbf{V}_c\) the tiled matrices for a column block of \(\bk\) and \(\bv\) of size \(\mathcal{B}_c\).
The softmax branch maintains the running state \((\mathbf{m}_r, \mathbf{d}_1, \mathbf{O}_1)\) as in FA.
Parallax additionally maintains the state \((\mathbf{d}_2, \mathbf{O}_2)\).
In each loop, the covariance branch uses the same \(\mathbf{K}_c, \mathbf{V}_c\) to compute the unnormalized scores \(\mathbf{S}_2 = \mathbf{R}_r\mathbf{K}_c^\top\), fuses them with the softmax weights as \(\mathbf{P}_2 = \mathbf{P}_1 \odot \mathbf{S}_2\), and then accumulates \(\mathbf{O}_2 = \mathbf{P}_2\mathbf{V}_c\) alongside \(\mathbf{O}_1\).
The final output combines the two running sums according to equation~\eqref{eq:parallax_expand}.

Both branches share the online maximum \(\mathbf{m}_r\), the rescaling factor \(\bm \alpha\) and the \(\mathbf{K}_c, \mathbf{V}_c\) tiles. Therefore Parallax does not require extra I/O in each iteration.
The detailed algorithm is provided in Algorithm~\ref{alg:parallax_forward}, where the additional operations of Parallax are highlighted in red.
The operator dependency graph and hardware mapping are shown in Figure~\ref{fig:dependency}.

\paragraph{Arithmetic intensity.}
The key property of Algorithm~\ref{alg:parallax_forward} is that it increases the arithmetic intensity (\texttt{AI}) over FA, defined as the ratio of floating point operations (FLOPs) to high-bandwidth memory (HBM) traffic in bytes.
Write \(L_q\) and \(L_{kv}\) as the query and KV sequence lengths respectively and \(d_h\) as the head dimension,
\begin{align}
\texttt{AI}^\textsf{FA} \approx \frac{4L_q L_{kv} d_h}{2(L_q + 2n_r L_{kv})d_h} = \frac{2L_q L_{kv}}{L_q + 2n_r L_{kv}},\qquad \texttt{AI}^\textsf{PLX} \approx \frac{8L_q L_{kv} d_h}{2(2L_q + 2n_r L_{kv})d_h} = \frac{2L_q L_{kv}}{L_q + n_r L_{kv}},
\end{align}where \(n_r = \lceil L_q / \mathcal{B}_r \rceil\) is the number of query row blocks.
In the regime where $n_rL_{kv}\gg L_q$, Parallax roughly \emph{doubles the arithmetic intensity} by adding more compute while reusing the same KV stream.
The shift toward a more compute bound operator is what makes decoding a target for kernel level optimization on modern hardware, which we analyze next.

\paragraph{Decode optimization. }\label{para:decode_optimization}
We prototype Parallax decode kernel in \texttt{CuTeDSL}~\citep{sun2025cutedsl} on NVIDIA Hopper GPUs. The design exploits the structural property that the two branches share the same KV stream, so on I/O-bound decode workloads it consumes essentially no additional HBM traffic.
We further exploit that Hopper's tensor core (TC) matmul instructions (WGMMA) operate on tiles of minimum size 64 rows by construction, whereas a decode step supplies only one query row.
The remaining rows of every WGMMA's accumulator would otherwise be idle. Therefore the QK and RK product can be computed jointly, within the same instructions that standard attention already issues.
The same applies to the two PV products in the output accumulation.
The prototype kernel also implements few other optimizations such as persistent split over the KV loop and in-kernel reduction.
We provide a detailed documentation of the kernel optimization in Appendix~\ref{app:decode_optimization}.
\paragraph{Profiling Result. }We profile the prototype kernel against FA2 and FA3 on H200 GPUs at BF16
precision, sweeping batch sizes from $1$ to $2{,}048$ and context lengths
from $128$ to $32{,}768$, both in powers of two. For each configuration we
compare against the best record of FA2 and FA3.
Because Parallax doubles the arithmetic intensity, FA cannot be matched on both FLOPs and HBM traffic simultaneously.
We therefore report two settings: in the \emph{compute-matched} setting, Parallax uses $d_h = 64$ such that it matches the FLOPs of FA at $d_h = 128$;
in the \emph{I/O-matched} setting, both kernels use $d_h = 128$ and Parallax doubles the compute of FA with the same HBM traffic.
Figure~\ref{fig:speedup-heatmap} reports the latency-ratio heatmaps for both settings. The prototype Parallax kernel matches or outperforms FA across all configurations. Additional profiling results are in Appendix~\ref{app:profiling}.
\section{Experiment}
In this section, we empirically validate Parallax on both synthetic and language modeling benchmarks.
We compare Parallax against the Softmax Attention (Attn, Transformer), Mamba, Gated DeltaNet (GDN), MesaNet (Mesa) and Kimi DeltaAttention (KDA)~\citep{kimiteam2025kimilinearexpressiveefficient}.

\subsection{Synthetic Benchmarks}
\label{sec:exp_mad}
\paragraph{{MAD}-Benchmark. }
We evaluate Parallax on the \texttt{MAD}-Benchmark~\citep{pmlr-v235-poli24a}, which consists of six synthetic tasks designed to evaluate the core ability of sequence mixers\footnote{We use the cached data released in the official repository at \url{https://github.com/athms/mad-lab}.}.
Particularly, the In-Context-Recall (\texttt{ICR}), Fuzzy-in-Context-Recall (\texttt{FCR}), Noisy-in-Context-Recall (\texttt{NCR}), and Selective-Copying (\texttt{SC}) tasks assess the model's ability to recall information from the context, while Compression (\texttt{CMP}) and Memorization (\texttt{MEM}) evaluate the model's ability to aggregate and memorize information from the training dataset.

\begin{wraptable}[19]{l}{0.45\textwidth}
\centering
\vspace{0pt}

\begin{subtable}{\linewidth}
\centering
\resizebox{0.85\linewidth}{!}{%
\begin{tabular}{c!{\vrule width 0.6pt}ccccc}
\toprule
Task & Attn & PLX & GDN & Mamba & Mesa \\
\midrule
\texttt{CMP} & 0.342 & 0.332 & 0.325 & \textbf{0.424} & 0.310 \\
\texttt{ICR} & 0.803 & 0.951 & 0.920 & 0.756 & \textbf{0.998} \\
\texttt{FCR} & 0.268 & \textbf{0.356} & 0.110 & 0.065 & 0.219 \\
\texttt{NCR} & 0.861 & 0.937 & 0.907 & 0.713 & \textbf{0.999} \\
\texttt{MEM} & 0.807 & 0.733 & 0.792 & \textbf{0.876} & 0.861 \\
\texttt{SC}  & 0.950 & \textbf{0.988} & 0.939 & 0.950 & 0.568 \\
\midrule
{Avg} & 0.672 & \textbf{0.716} & 0.665 & 0.631 & 0.659 \\
\bottomrule
\end{tabular}
}
\caption{\texttt{MAD} benchmark accuracy.}
\label{tab:results_merged_clla}
\end{subtable}

\vspace{2pt}

\begin{subfigure}{\linewidth}
\centering
\includegraphics[width=0.85\linewidth]{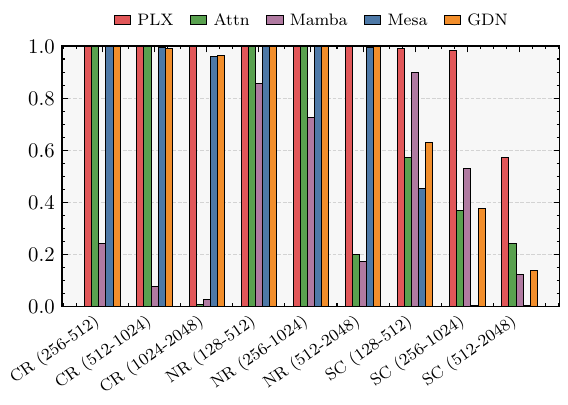}
\vspace{-4pt}
\caption{\texttt{MAD}-challenge recall accuracy.}
\label{fig:mad-challenge}
\end{subfigure}

\end{wraptable}
All models follow a two-layer architecture with sequence mixer and MLP blocks interleaved. Different from prior work, all the models are trained with Muon optimizer.
As reported in Table~\ref{tab:results_merged_clla}, Parallax consistently improves on the recall-oriented tasks (\texttt{ICR}, \texttt{FCR}, \texttt{NCR}, \texttt{SC}) while remaining competitive on the compression and memorization tasks (\texttt{CMP}, \texttt{MEM}), and attains the highest overall accuracy.

To further showcase the advantage of Parallax under more challenging recall conditions, we synthesize an additional set of harder tasks by scaling up the KV pairs and the sequence length on \texttt{ICR}, \texttt{NCR}, and \texttt{SC}, with vocabulary size and context length stressed up to 512 and 2048 respectively.
We apply the same training specification as in the previous experiment.
As shown in Figure~\ref{fig:mad-challenge}, Parallax retains accuracy as the difficulty grows, whereas other baselines degrade dramatically, most visibly on \texttt{SC} at the longest context lengths.
Full experimental setup is reported in Appendix~\ref{app:experiment_setup}.

\subsection{Language Modeling Benchmarks}
\label{sec:exp_lm}

{\setlength{\textfloatsep}{0pt}
\begin{table}[t]
\centering
\footnotesize

\begin{subtable}[t]{0.58\textwidth}
\centering
\vspace{0pt}

\resizebox{\linewidth}{!}{%
\begin{tabular}{ccccccc}
\toprule
\textbf{Scale} & \textbf{Model} & \textbf{Optim.}
& \textbf{Sched.} & \textbf{RoPE $\brho$}
& \textbf{Batch} & \textbf{Tokens} \\
\midrule
\multirow{8}{*}{0.6B}
  & Transformer          & Muon  & WSD    & --     & 3.93M & 78.6B \\
  & Transformer$^\dag$   & Muon  & WSD    & --     & 3.93M & 78.6B \\
  & Parallax$^\dag$    & Muon  & WSD    & \cmark & 3.93M & 78.6B \\
  & Parallax    & Muon  & WSD    & \xmark & 3.93M & 78.6B \\
  & Parallax    & Muon  & WSD    & \cmark & 3.93M & 78.6B \\
  \cmidrule(l){2-7}
  & Transformer          & AdamW & WSD    & --     & 3.93M & 78.6B \\
  & Parallax    & AdamW & WSD    & \cmark & 3.93M & 78.6B \\
  \cmidrule(l){2-7}
  & Transformer          & AdamW & Cosine & --     & 3.93M & 78.6B \\
  & Parallax    & AdamW & Cosine & \cmark & 3.93M & 78.6B \\
\midrule
\multirow{3}{*}{1.7B}
  & Transformer          & Muon  & WSD    & --     & 7.86M & 157.2B \\
  & Parallax    & Muon  & WSD    & \xmark & 7.86M & 157.2B \\
  & Parallax    & Muon  & WSD    & \cmark & 7.86M & 157.2B \\
\bottomrule
\end{tabular}}

\caption{Training configurations.}
\label{tab:train_spec}
\vspace{8pt}
\resizebox{\linewidth}{!}{%
\begin{tabular}{lcc@{\hskip 20pt}lcc}
\toprule
\multicolumn{3}{c}{\textbf{Optimizer}} &
\multicolumn{3}{c}{\textbf{Scheduler}} \\
\cmidrule(lr){1-3}\cmidrule(lr){4-6}
\textbf{Hyperparam.} & \textbf{Muon} & \textbf{AdamW}
&
\textbf{Hyperparam.} & \textbf{WSD} & \textbf{Cosine} \\
\midrule
Learning rate & $5\!\times\!10^{-3}$ & $3\!\times\!10^{-4}$
  & Warmup      & $0\%$  & $1\%$ \\
Weight decay  & 0.1 & 0.1
  & Decay type  & Linear & Cosine \\
Momentum      & 0.95 & 0.9, 0.95
  & Decay start & $80\%$ & $1\%$ \\
Embed/Norm lr  & 0.3$\times$/0.015$\times$ & 1$\times$/1$\times$
  & Final lr    & 0 & 0 \\
\bottomrule
\end{tabular}}

\caption{Optimizer and learning rate scheduler hyperparameters.}
\label{tab:optim}
\end{subtable}%
\hfill
\begin{subfigure}[t]{0.36\textwidth}
\centering
\vspace{-4pt}
\includegraphics[width=\linewidth]{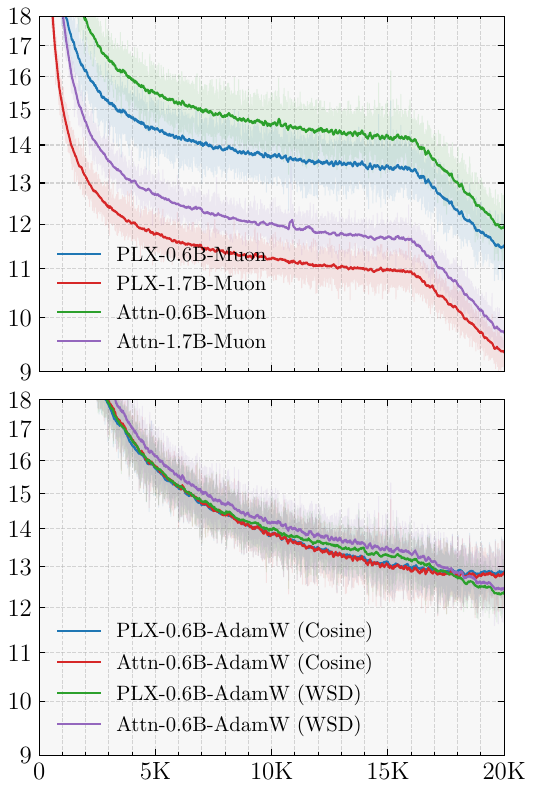}
\caption{Training perplexity.}
\label{fig:loss-curve-combined}

\end{subfigure}

\caption{
    Table~\ref{tab:train_spec} and Table~\ref{tab:optim} detail the training configurations and hyperparameters.
    We apply the same Muon optimizer settings for 0.6B and 1.7B scale models.
    Figure~\ref{fig:loss-curve-combined} shows the training perplexity curves under different optimizers and schedulers.
    Curves are smoothed for visibility.
}
\label{tab:train_configs_and_curves}

\end{table}
}

Having established the recall advantage of Parallax on synthetic tasks, we further evaluate it on LLM pretraining and downstream benchmarks.
\paragraph{Experiment setup. }
We adopt the Qwen-3 architecture~\citep{qwen3} (which applies RMSNorm~\citep{10.5555/3454287.3455397} to \(\bq\) and \(\bk\) vectors) as implemented in the \texttt{torchtitan}~\citep{liang2024torchtitan} repository, and additionally apply RMSNorm to \(\brho\) in each Parallax layer.
All models are pretrained on the Ultra-FineWeb dataset~\citep{wang2025ultrafineweb} with a context length of 4096.
We compare Parallax against the Transformer baseline at both 0.6B and 1.7B parameter scales.
At the 0.6B scale, we also provide the result for KDA, GDN and controlled experiment baselines:
\begin{itemize}
    \item \emph{Parameter-matched Transformer (Transformer$^\dag$).} Parallax introduces extra parameters from the \(\bm{W}_R\) projection. Transformer$^\dag$ adds the same number of parameters to the Transformer baseline by increasing the query head count in GQA.
    This choice maintains the KV size and mirrors how Parallax allocates its parameters due to the similarity between \(\bq\) and \(\brho\) in computation.
    \item \emph{Compute-matched Parallax (Parallax$^\dag$).} Parallax doubles the arithmetic intensity compared to FA. Parallax$^\dag$ halves the head dimension to strictly match the attention layer compute.
    The reduced parameter count is rebalanced by increasing the FFN dimension to match the total parameter count of the standard Parallax.
\end{itemize}
We additionally ablate the effect of applying RoPE~\citep{su2024roformer} to \(\brho\) in Parallax.
The details of training configuration and optimizer hyperparameters are summarized in Table~\ref{tab:train_spec} and Table~\ref{tab:optim}.

\paragraph{Evaluation and benchmarks. }
For perplexity evaluation, we report both LAMBADA (LMB.)~\citep{paperno2016lambada} and WikiText (Wiki)~\citep{merity2017pointer}.
For zero-shot QA and commonsense reasoning evaluation, we report BoolQ~\citep{clark2019boolq}, HellaSwag (HSwag)~\citep{zellers2019hellaswag}, PIQA~\citep{bisk2020piqa}, ARC-easy and ARC-challenge~\citep{clark2018arc}, WinoGrande (Wino.)~\citep{sakaguchi2020winogrande}, OpenBookQA (OBQA)~\citep{mihaylov2018openbookqa}, and SciQ~\citep{welbl2017sciq}.
All evaluations are conducted with the LM Evaluation Harness~\citep{eval-harness}.

\paragraph{Results discussion. }
The results are summarized in Table~\ref{tab:lm-results}.
At the 0.6B scale, Parallax with Muon achieves the best perplexity on both evaluation tasks and the highest average downstream accuracy.
This pattern holds at the 1.7B scale, demonstrating that the gains persist at larger scale under Muon.
We also find that applying RoPE to the \(\brho\) vectors is still beneficial at both scales under Muon, even though the base softmax scores already incorporate positional information.

The two controls answer two distinct questions about the source of the gain:
\begin{itemize}
    \item \emph{Does the gain come from extra parameters?} Transformer$^\dag$ with matched parameters closes only a small fraction of the gap to Parallax, confirming that the improvement is not simply from additional parameters in query-like projections.
    \item \emph{Does the gain require extra Attention compute?} Parallax$^\dag$ with matched attention compute significantly outperforms baseline Transformer and Transformer$^\dag$, ruling out the added compute as a necessary condition.
\end{itemize}
These results provide strong evidence that the gain is driven by the mechanism itself.
Figure~\ref{fig:loss-curve-combined} shows the training perplexity curves under different optimizers and schedulers, all with RoPE applied.
The curves of Muon models show a substantial gap between Parallax and the Transformer, consistent with the downstream evaluation performance.
However, the advantage shrinks markedly or even disappears under AdamW. The performance difference indicates a strong optimizer-architecture interaction, which Section~\ref{sec:analysis} further analyze and discuss.

{\setlength{\textfloatsep}{0pt}
\begin{table}[t]
\small
\centering
\resizebox{\textwidth}{!}{%
\begin{tabular}{cc!{\vrule width 0.6pt}c!{\vrule width 0.6pt}cc!{\vrule width 0.6pt}ccccccccc!{\vrule width 0.6pt}c}
\toprule
\begin{tabular}[c]{@{}c@{}}\textbf{Size}\\ \textbf{Optimizer}\end{tabular}
& \textbf{Model}
& \begin{tabular}[c]{@{}c@{}}\textbf{RoPE}\\ $\brho$\end{tabular}
& \begin{tabular}[c]{@{}c@{}}\textbf{LMB.}\\ ppl$\downarrow$\end{tabular}
& \begin{tabular}[c]{@{}c@{}}\textbf{Wiki}\\ ppl$\downarrow$\end{tabular}
& \begin{tabular}[c]{@{}c@{}}\textbf{LMB.}\\ acc$\uparrow$\end{tabular}
& \begin{tabular}[c]{@{}c@{}}\textbf{BoolQ}\\ acc$\uparrow$\end{tabular}
& \begin{tabular}[c]{@{}c@{}}\textbf{HSwag}\\ acc$\uparrow$\end{tabular}
& \begin{tabular}[c]{@{}c@{}}\textbf{PIQA}\\ acc$\uparrow$\end{tabular}
& \begin{tabular}[c]{@{}c@{}}\textbf{ARC-e}\\ acc$\uparrow$\end{tabular}
& \begin{tabular}[c]{@{}c@{}}\textbf{ARC-c}\\ acc$\uparrow$\end{tabular}
& \begin{tabular}[c]{@{}c@{}}\textbf{Wino.}\\ acc$\uparrow$\end{tabular}
& \begin{tabular}[c]{@{}c@{}}\textbf{OBQA}\\ acc$\uparrow$\end{tabular}
& \begin{tabular}[c]{@{}c@{}}\textbf{SciQ}\\ acc$\uparrow$\end{tabular}
& \begin{tabular}[c]{@{}c@{}}\textbf{Avg}\\ $\uparrow$\end{tabular} \\
\midrule
\multirow{2}{*}{\begin{tabular}[c]{@{}c@{}}Cosine\\AdamW\end{tabular}}
& Transformer    & --- & 31.57 & 26.68 & 34.93 & 61.47 & 46.65 & 70.35 & 60.19 & 30.63 & 52.33 & 34.00 & 80.50 & 52.34 \\
& Parallax  & \cmark & 29.54 & 26.63 & 36.48 & 58.38 & 46.25 & 69.75 & 58.16 & 30.29 & 52.96 & 36.40 & 74.40 & 51.45 \\
\midrule
\multirow{2}{*}{\begin{tabular}[c]{@{}c@{}}WSD\\AdamW\end{tabular}}
& Transformer    & --- & 26.63 & 25.30 & 36.72 & 58.41 & 48.19 & 70.73 & 58.08 & 31.48 & 54.22 & 35.20 & 80.50 & 52.61 \\
& Parallax  & \cmark & 26.59 & 25.01 & 37.40 & 57.16 & 48.63 & 71.44 & 60.90 & 32.00 & 52.17 & 35.60 & 78.80 & 52.68 \\
\midrule
\multirow{6}{*}{\begin{tabular}[c]{@{}c@{}}0.6B\\Muon\end{tabular}}
& Kimi DeltaAttn & --- & 25.16 & 26.81 & 38.29 & 55.29 & 49.30 & 71.11 & 62.12 & 33.19 & 52.57 & 34.20 & 78.50 & 52.73 \\
& Gated DeltaNet & --- & 24.63 & 26.32 & 37.69 & 59.33 & 50.58 & 71.71 & 61.57 & 32.68 & 55.41 & 35.60 & 78.50 & 53.67 \\
& Transformer    & --- & 22.15 & 23.43 & 40.07 & 58.84 & 52.29 & 70.73 & 61.74 & 33.36 & 55.41 & \textbf{37.20} & 81.20 & 54.54 \\
& Transformer$^\dag$    & --- & 22.35 & 23.36 & 39.45 & 56.17 & 52.35 & 71.92 & 63.01 & 35.49 & \textbf{56.59} & 36.80 & 82.30 & 54.90 \\
& Parallax  & \xmark & 19.77 & 22.69 & 41.04 & 60.10 & 53.14 & 71.93 & \textbf{64.27} & 34.73 & 55.64 & 35.20 & \textbf{83.80} & 55.54 \\
& \cellcolor{gray!15}Parallax$^\dag$ & \cellcolor{gray!15}\cmark & \cellcolor{gray!15}20.29 & \cellcolor{gray!15}22.49 & \cellcolor{gray!15}40.21 & \cellcolor{gray!15}\textbf{61.44} & \cellcolor{gray!15}\textbf{54.48} & \cellcolor{gray!15}72.03 & \cellcolor{gray!15}63.80 & \cellcolor{gray!15}\textbf{36.95} & \cellcolor{gray!15}55.01 & \cellcolor{gray!15}35.80 & \cellcolor{gray!15}82.40 & \cellcolor{gray!15}55.79 \\
& \cellcolor{gray!15}Parallax & \cellcolor{gray!15}\cmark & \cellcolor{gray!15}\textbf{18.56} & \cellcolor{gray!15}\textbf{22.25} & \cellcolor{gray!15}\textbf{41.83} & \cellcolor{gray!15}60.24 & \cellcolor{gray!15}53.73 & \cellcolor{gray!15}\textbf{72.52} & \cellcolor{gray!15}63.51 & \cellcolor{gray!15}35.49 & \cellcolor{gray!15}{56.20} & \cellcolor{gray!15}36.80 & \cellcolor{gray!15}83.60 & \cellcolor{gray!15}\textbf{55.99} \\
\midrule
\multirow{3}{*}{\begin{tabular}[c]{@{}c@{}}1.7B\\Muon\end{tabular}}
& Transformer    & --- & 13.07 & 18.11 & 46.77 & \textbf{64.92} & 62.94 & \textbf{76.39} & 69.53 & 42.32 & \textbf{61.01} & \textbf{41.00} & 88.00 & 61.43 \\
& Parallax  & \xmark & 10.85 & 17.27 & 49.54 & 62.72 & 64.34 & 75.79 & 70.33 & \textbf{43.69} & 59.91 & 39.80 & \textbf{89.10} & 61.69 \\
& \cellcolor{gray!15}Parallax & \cellcolor{gray!15}\cmark & \cellcolor{gray!15}\textbf{10.80} & \cellcolor{gray!15}\textbf{17.08} & \cellcolor{gray!15}\textbf{50.26} & \cellcolor{gray!15}64.59 & \cellcolor{gray!15}\textbf{64.54} & \cellcolor{gray!15}\textbf{76.39} & \cellcolor{gray!15}\textbf{73.27} & \cellcolor{gray!15}42.49 & \cellcolor{gray!15}60.77 & \cellcolor{gray!15}\textbf{41.00} & \cellcolor{gray!15}88.70 & \cellcolor{gray!15}\textbf{62.45} \\
\bottomrule
\end{tabular}%
}
\vspace{4pt}
\caption{
    Downstream perplexity and zero-shot accuracy
    ($\uparrow$: higher is better, $\downarrow$: lower is better).
    The average score is computed over the accuracy benchmarks.
    The AdamW groups are at the 0.6B scale.
}
\label{tab:lm-results}
\end{table}
}

\subsection{Mechanism Analysis}
\label{sec:analysis}

\begin{figure}[t]
    \centering
    \begin{subfigure}[t]{0.23\textwidth}
        \centering
        \includegraphics[height=3.2cm,keepaspectratio]{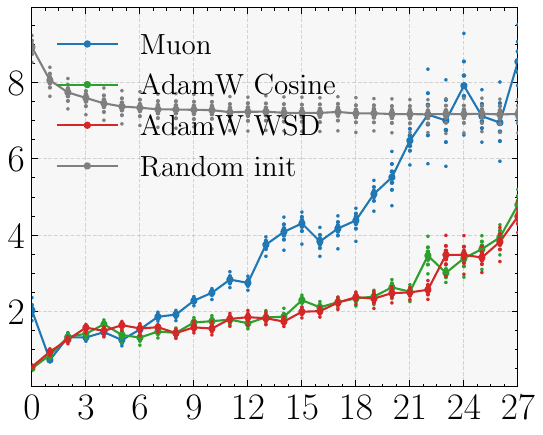}
        \caption{\texttt{COR}}
        \label{fig:cor-ratio}
    \end{subfigure}
    \hfill
    \begin{subfigure}[t]{0.23\textwidth}
        \centering
        \includegraphics[height=3.2cm,keepaspectratio]{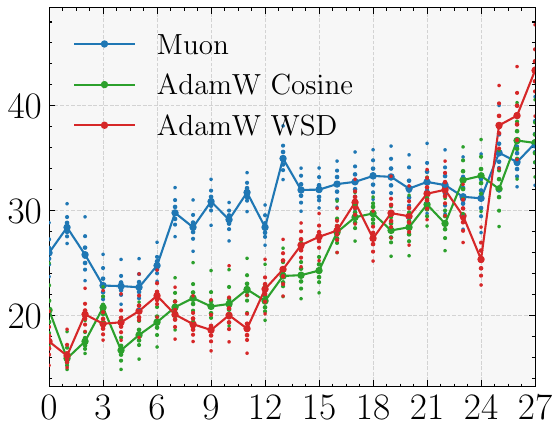}
        \caption{\(\|\texttt{Corr}\|_F\)}
        \label{fig:cor-vkfro}
    \end{subfigure}
    \hfill
    \begin{subfigure}[t]{0.23\textwidth}
        \centering
        \includegraphics[height=3.2cm,keepaspectratio]{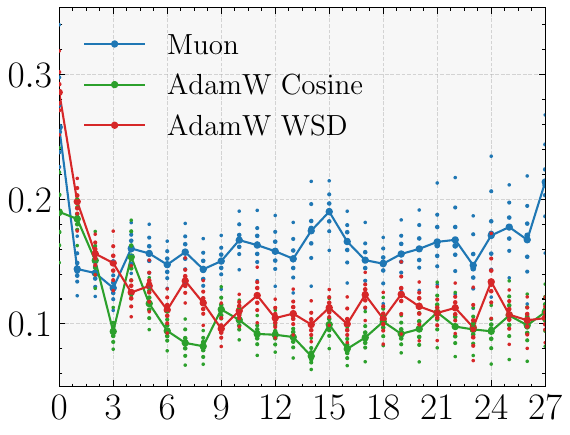}
        \caption{\texttt{CPA}}
        \label{fig:cor-cpa}
    \end{subfigure}
    \hfill
    \begin{subfigure}[t]{0.23\textwidth}
        \centering
        \includegraphics[height=3.2cm,keepaspectratio]{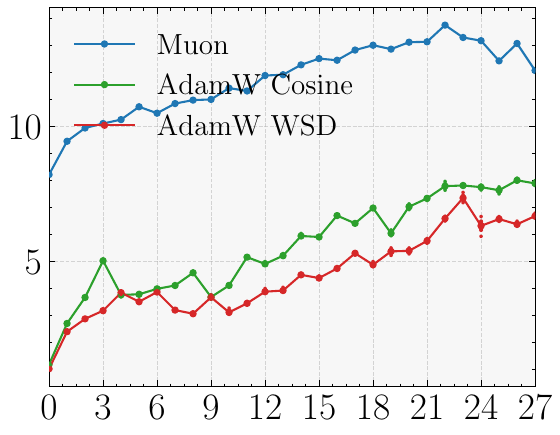}
        \caption{\(\|\brho\|\)}
        \label{fig:cor-rhonorm}
    \end{subfigure}
    \caption{
        From left to right: (\ref{fig:cor-ratio}) correction-to-output ratio \texttt{COR}; (\ref{fig:cor-vkfro}) KV correlation \(\|\texttt{Corr}\|_F\); (\ref{fig:cor-cpa}) covariance-probe alignment \texttt{CPA}; (\ref{fig:cor-rhonorm}) probe norm \(\|\brho\|\). X-axis is the layer index.
        The dots represent the quantile values across heads and positions, and the line represents the mean.
    }
    \label{fig:cor-analysis}
\end{figure}

In this section, we quantitatively analyze the optimizer-architecture interaction of Parallax under different optimizers.
All the analyses are conducted on the 0.6B scale models (RoPE on \(\brho\) applied throughout).
Following the magnitude tension discussion in Section~\ref{para:additive_structure}, the headline quantity is the correction-to-output ratio (\texttt{COR}), defined as
\begin{align}
    \texttt{COR}_i = {\|\bm\Sigma_{KV}^{(i)}\brho_i\|}/{\|\bo_i^{\textsf{SA}}\|},
\end{align}which measures the relative strength of the covariance correction compared to the intercept term in the affine structure.
To decompose any \texttt{COR} gap into directional and magnitude components of the representations, we measure the KV correlation (\texttt{Corr}) and the covariance-probe alignment (\texttt{CPA}) for the directional component, and the probe norm \(\|\brho_i\|\) for the magnitude:
\begin{align}
    \texttt{Corr}_i = \bm\Sigma_{VV}^{(i)-\frac{1}{2}}\bm\Sigma_{KV}^{(i)}\bm\Sigma_{KK}^{(i)-\frac{1}{2}}, \quad
    \texttt{CPA}_i = {\|\bm\Sigma_{KV}^{(i)}\brho_i\|}/{\|\brho_i\|\|\bm\Sigma_{KV}^{(i)}\|_2}.
\end{align}The two metrics are unitless and respectively measure the directional structure of the KV pathway and the alignment of \(\brho_i\) with its leading directions.

The results are shown in Figure~\ref{fig:cor-analysis}.
\texttt{COR} increases with layer depth under all optimizers, with Muon reaching values above \(8\) in the deepest layers while AdamW remains below \(4\).
Compared to random initialization, where \texttt{COR} is uniformly high across all layers, training suppresses the correction in early layers and selectively amplifies it in deeper layers.
This amplification barely recovers the initialization level under AdamW, while the deepest layers exceed it under Muon.
The probe norm shows the largest optimizer gap, which also grows with layer depth.
The directional diagnostics differs as well, with Muon exhibiting higher \(\|\texttt{Corr}\|_F\) and higher \(\texttt{CPA}\) at most layers.
Therefore, the \texttt{COR} gap is not solely a scale effect: Muon also produces richer KV associations and better aligned probes.
The training dynamics of \texttt{COR} is shown in Figure~\ref{fig:corr-ratio-combined}.

\paragraph{Gating behavior.}

The preceding analysis shows that AdamW produces smaller norms and lower \texttt{COR} than Muon.
A natural question is whether this simply reflects a scaling convention, or whether AdamW intrinsically fails to utilize the correction branch in training.
To test this, we use a learnable sigmoid gate \(g_i = \sigma(\bm w_g^\top \bx_i)\) that modulates the probe as \(\brho_i = g_i \cdot \bm W_R \bx_i\).
The gate allows the model to continuously interpolate between suppressing and activating the correction.

Figure~\ref{fig:sigmoid-gate} shows that, under Muon, the model learns to open the gate, and the gated run gradually converges to the same final loss as the non-gated baseline.
Under AdamW, however, the gate value decreases and stabilizes around \(0.26\), and it achieves final performance comparable to Transformer, indicating that the model learns to suppress the covariance correction rather than utilize it.

\paragraph{Spectral structure of weights. }
The activation level differences in both magnitude and direction originate from the weight matrices under different optimizers.
We analyze the stable rank of the projection weight in both Parallax and Softmax Attention.
Beyond the individual projection matrices, we also analyze the bilinear circuits \(\bm W_{QK} = \bm W_Q \bm W_K^\top\), \(\bm W_{OV} = \bm W_O \bm W_V\) and \(\bm W_{RK} = \bm W_R \bm W_K^\top\), which provides a more direct view of the effective transformations.
The bilinear circuits are constructed for each head separately, and the stable rank is averaged across heads and layers.

\begin{wrapfigure}[13]{r}{0.5\textwidth}
\centering
\vspace{-10pt}
\setlength{\tabcolsep}{3pt}
\resizebox{\linewidth}{!}{%
\begin{tabular}{@{}c @{\hskip 4pt} rrr @{\hskip 16pt} rrr@{}}
\toprule
& \multicolumn{3}{c}{\textbf{Softmax Attention}}
& \multicolumn{3}{c}{\textbf{Parallax}} \\
\cmidrule(lr{10pt}){2-4} \cmidrule(lr){5-7}
& Muon & \small AW-Cos & \small AW-WSD
& Muon & \small AW-Cos & \small AW-WSD \\
\midrule
$\bm W_Q$ & 116.0 & \textcolor{darkred}{\textbf{17.4$\downarrow$}} & \textcolor{darkred}{\textbf{21.7$\downarrow$}}
           & 97.4 & \textcolor{darkred}{\textbf{20.9$\downarrow$}} & \textcolor{darkred}{\textbf{20.7$\downarrow$}} \\
$\bm W_K$ & 105.8 & \textcolor{darkred}{\textbf{18.1$\downarrow$}} & \textcolor{darkred}{\textbf{22.9$\downarrow$}}
           & 106.4 & \textcolor{darkred}{\textbf{18.0$\downarrow$}} & \textcolor{darkred}{\textbf{18.3$\downarrow$}} \\
$\bm W_V$ & 145.5 & 148.1 & 138.6
           & \textcolor{darkgreen}{\textbf{199.8$\uparrow$}} & \textcolor{darkgreen}{\textbf{177.2$\uparrow$}} & \textcolor{darkgreen}{\textbf{184.7$\uparrow$}} \\
$\bm W_O$ & 186.6 & 109.8 & 121.6
           & \textcolor{darkgreen}{\textbf{182.0$\uparrow$}} & \textcolor{darkgreen}{\textbf{137.4$\uparrow$}} & \textcolor{darkgreen}{\textbf{144.7$\uparrow$}} \\
$\bm W_R$ & \multicolumn{3}{c}{---}
           & 134.0 & \textcolor{darkred}{\textbf{9.3$\downarrow$}} & \textcolor{darkred}{\textbf{11.1$\downarrow$}} \\
\midrule
$\bm W_{QK}$   & {22.3} & \textcolor{darkred}{\textbf{4.6$\downarrow$}} & \textcolor{darkred}{\textbf{5.9$\downarrow$}}
      & {25.5} & \textcolor{darkred}{\textbf{4.9$\downarrow$}} & \textcolor{darkred}{\textbf{5.1$\downarrow$}} \\
$\bm W_{OV}$   & 24.8 & 21.2 & 21.5
      & \textcolor{darkgreen}{\textbf{34.1$\uparrow$}} & \textcolor{darkgreen}{\textbf{30.4$\uparrow$}} & \textcolor{darkgreen}{\textbf{30.4$\uparrow$}} \\
$\bm W_{RK}$   & \multicolumn{3}{c}{---}
      & {29.1} & \textcolor{darkred}{\textbf{9.4$\downarrow$}} & \textcolor{darkred}{\textbf{9.2$\downarrow$}} \\
\bottomrule
\end{tabular}}
\captionof{table}{Stable ranks of projection matrices and bilinear circuits, averaged across heads and layers.}
\label{tab:srank}
\end{wrapfigure}
The results are summarized in Table~\ref{tab:srank}.
Under Muon, all projection matrices develop substantially higher stable rank than under AdamW, consistent with the prior observations.
Notably, the stable ranks of \(\bm W_Q\), \(\bm W_K\) and \(\bm W_{QK}\) are nearly identical between Parallax and Transformer under every optimizer configuration, confirming the attention scoring structure is shaped by the optimizer alone and unaffected by the architecture.
The rank value of this pattern is marked in red.

For Parallax, \(\bm W_R\) is disproportionately affected. It exhibits the largest optimizer sensitivity of any projection, with a gap exceeding that of \(\bm W_Q\) and \(\bm W_K\).
The \(\bm W_{RK}\) circuit inherits this bottleneck and mirrors the pattern of the QK circuit, where the stable rank is comparable under Muon but collapses under AdamW.
This explains the higher \texttt{CPA} under Muon observed in Figure~\ref{fig:cor-cpa}: a high rank RK circuit allows \(\brho\) to align more effectively with the leading covariance directions.

Beyond the optimizer effect, we also observe a consistent architectural effect where \(\bm W_V\), \(\bm W_O\) and \(\bm W_{OV}\) circuits have higher stable rank for all optimizers under Parallax.
This enrichment reflects the architectural contribution to the value pathway, providing the output projection with a richer set of directions to read from.
The rank value of this pattern is marked in green.

While Muon delivers a substantial advantage over AdamW, we do not claim Muon with WSD is the optimal combination for Parallax.
We provide additional discussion and visualizations in Appendix~\ref{app:training_dynamics} and Appendix~\ref{app:decay}.
\begin{figure}[h]
    \centering
    \begin{subfigure}[t]{0.69\textwidth}
        \centering
        \includegraphics[height=3.5cm,keepaspectratio]{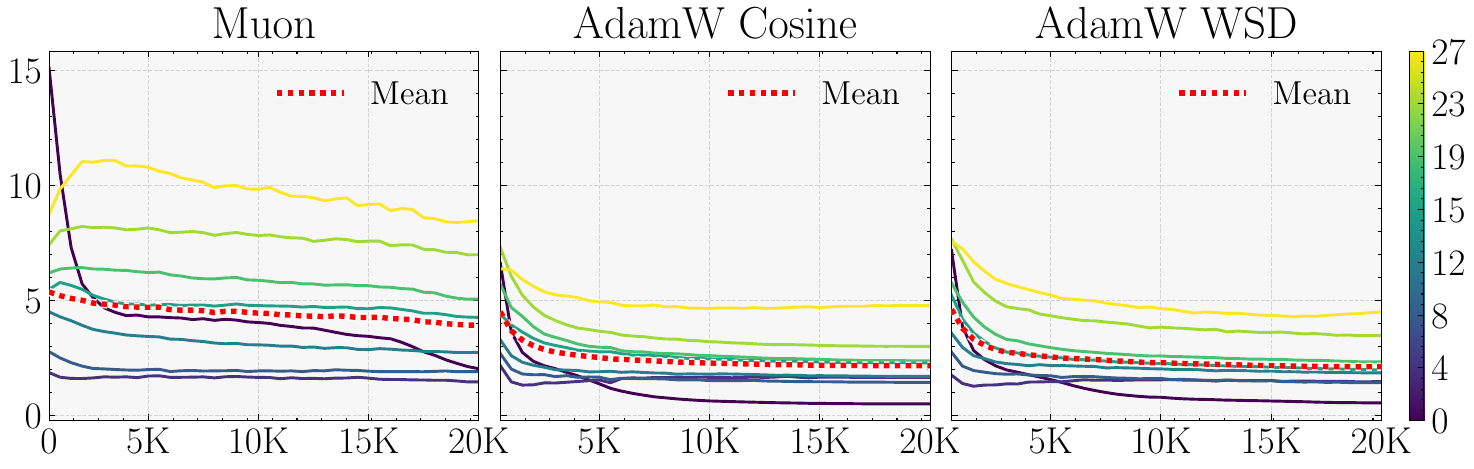}
        \caption{Training dynamics of \texttt{COR} under different optimizers.}
        \label{fig:corr-ratio-combined}
    \end{subfigure}
    \hspace{4pt}
    \begin{subfigure}[t]{0.24\textwidth}
        \centering
        \includegraphics[height=3.2cm,keepaspectratio]{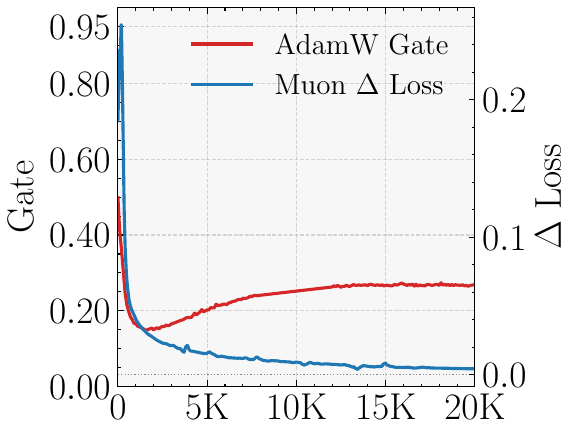}
        \caption{Gating behavior.}
        \label{fig:sigmoid-gate}
    \end{subfigure}
    \caption{
        Training dynamics of \texttt{COR} and the gating behavior.
        Figure~\ref{fig:corr-ratio-combined} shows the training trajectory of \texttt{COR} at sample layers under different optimizers. The colorbar represents the layer index.
        Figure~\ref{fig:sigmoid-gate} shows the gating behavior. The left axis is the gate value and the right axis is the train loss difference.
    }
\end{figure}

\subsection{Parallax Score Distribution Patterns}
Beyond the optimizer driven differences analyzed above, the Parallax mechanism itself produces quantitatively different score distributions from those of Softmax Attention.
Parallax output can be written as a weighted average over values according to Equation~\ref{eq:parallax_forward}. We denote the per-token Parallax weight by \(s_{ij}\in\mathbb{R}\), analogous to the softmax weight \(p_{ij}\),
\begin{align}
s_{ij} = \frac{w_{ij}(1-t_{ij}+\bar t_i)}{\sum_{j'} w_{ij'}(1-t_{ij'}+\bar t_i)} = p_{ij}(1-t_{ij}+\bar t_i).
\end{align}Although \(\sum\nolimits_{j\le i}s_{ij} = 1\) holds, the individual weights \(s_{ij}\) can take negative values and admit values \(|s_{ij}|\gg 1\), which standard softmax weights cannot.
This unbounded range provides additional expressive capacity that the correction branch contributes.
The following analyses are conducted on a held-out validation batch and averaged across queries and heads.
\begin{figure}[t]
    \centering
    \begin{subfigure}[t]{0.32\textwidth}
        \centering
        \includegraphics[height=3.8cm,keepaspectratio]{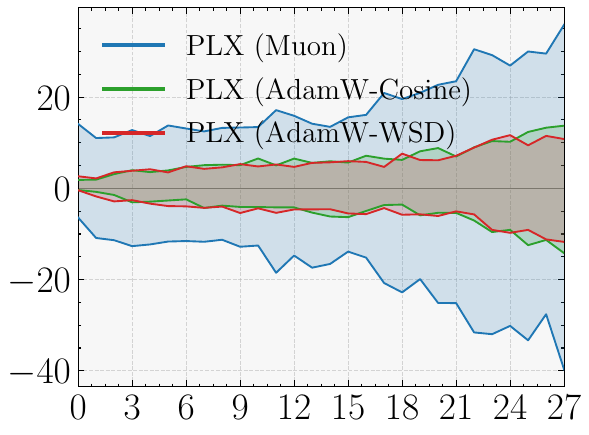}
        \caption{Score ranges.}
        \label{fig:attention-score-range}
    \end{subfigure}
    \begin{subfigure}[t]{0.32\textwidth}
        \centering
        \includegraphics[height=3.8cm,keepaspectratio]{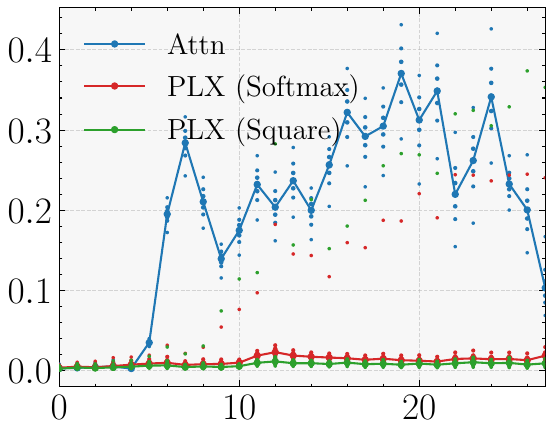}
        \caption{Attention sink.}
        \label{fig:attention-sink}
    \end{subfigure}
    \hspace{-4pt}
    \begin{subfigure}[t]{0.32\textwidth}
        \centering
        \includegraphics[height=3.8cm,keepaspectratio]{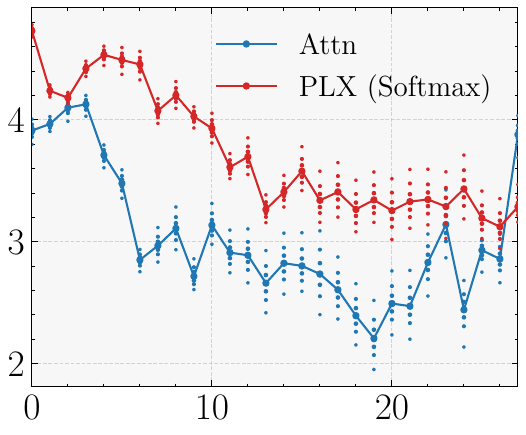}
        \caption{Attention entropy.}
        \label{fig:attention-entropy}
    \end{subfigure}
    \caption{Parallax score patterns.
    Figure~\ref{fig:attention-score-range}, \ref{fig:attention-sink}, and \ref{fig:attention-entropy} respectively show the score range, attention sink and attention entropy patterns of Parallax and Transformer.
    Dots represent the quantile values across heads and positions, and the line represents the mean.
    }
    \label{fig:attention-scores}
\end{figure}
\begin{enumerate}
    \item \textbf{Score range.} We measure the per-query score range for Parallax with three optimizer configurations. Figure~\ref{fig:attention-score-range} report them as a function of layer depth.
    Parallax weights routinely take negative values, allowing the model to actively subtract value components from irrelevant tokens rather than merely de-emphasize them.
    The score range grows with layer depth, with extreme values spanning approximately \(\pm 40\) in the deepest layers under Muon, consistent with the increasing \texttt{COR} and the stronger correction effect.
    \item \textbf{Attention sink.} Softmax Attention is known to concentrate excessive probability mass on the first token \citep{xiao2024efficient}. We quantify the degree of this concentration by measuring the average weight on the first token.
    For Parallax, we measure the sink ratio for both the base softmax \(p_{ij}\) and the combined weights \(s_{ij}\). For the combined weights, we use the squared weight share to handle the negative values:
    \begin{align}
        \texttt{sink}^{\textsf{SA}}_i = \frac{p_{i1}}{\sum\nolimits_{j\le i} p_{ij}} = p_{i1}, \qquad \texttt{sink}^{\textsf{PLX}}_i = \frac{s_{i1}^2}{\sum\nolimits_{j\le i} s_{ij}^2}.
    \end{align}Figure~\ref{fig:attention-sink} shows that Parallax substantially reduces the attention sink in both the base softmax and the combined weights, suggesting that the correction branch may absorb the routing role that Softmax Attention typically discharges onto the first token.
    \item \textbf{Attention entropy.} We measure the dispersion of the softmax distribution in Softmax Attention and the base softmax in Parallax through the Shannon entropy:
    \begin{align}
        H_i = -\sum\nolimits_{j\le i} p_{ij}\log p_{ij}.
    \end{align}Figure~\ref{fig:attention-entropy} shows that Parallax's base softmax entropy is consistently higher than that of the Transformer baseline, showing that Parallax produces more diffuse attention weights.
    Intuitively, Parallax uses the softmax for broader contextual aggregation and offloads fine-grained token discrimination to the correction branch.
\end{enumerate}
We provide additional score distribution visualizations in Appendix~\ref{app:attention_maps}.

\section{Limitations and Future Directions}
\label{sec:limitations}
In this section, we outline several directions opened by this work.

\paragraph{Scaling Parallax.}
Validating the perplexity gain and optimizer-architecture interaction at larger scale, longer context, in combination with components such as MoE and other architectural modifications is left to future work.
The doubled arithmetic intensity opens additional flexibility in tuning the head dimension, head count, and attention to FFN ratio.
Identifying the recipe that best balances model performance and throughput on a given hardware target is an important empirical question.

\paragraph{Optimizing the efficiency of Parallax.}
Parallax inherits the streaming structure of Softmax Attention. Therefore, any contextual sparsity pattern applicable to Softmax Attention, including sliding window, dilated or block sparse, extends directly to Parallax.
It is also structurally compatible with other optimization techniques such as MLA.
We leave the kernel development and performance evaluation of these variants to future work.

\paragraph{Post-training adaptation from pretrained Transformers.}
When initialize $\bm{W}_R = \bm{0}$, Parallax layer behaves identically to a Softmax Attention layer at the start of training.
A pretrained Transformer checkpoint can therefore be converted into a Parallax model by adding the $\bm{W}_R$ weight and fine-tuning.
This contrasts sharply with the Linear Attention family, where no parameter setting recovers Softmax Attention exactly and typically requires retraining to adapt to the new architecture.
Whether the post-training adaptation to Parallax is effective under different optimizer settings is an interesting question we leave open.

\paragraph{Theoretical understanding of the optimizer-architecture interaction.}
In Section~\ref{sec:analysis} we empirically characterize the activation and weight spectra of Parallax under different optimizers and diagnose the performance gap between Muon and AdamW.
However, the precise characterization behind the observed optimizer dependence of Parallax remains an open question.
It also remains to be seen whether this phenomenon happens in other affine mechanisms as discussed in Section~\ref{sec:connections}.

\paragraph{Implications for other attention mechanisms.}
The derivation in Section~\ref{sec:connections} suggests two extensions.
First, Linear Attention, DeltaNet and MesaNet all drop the intercept by construction.
Reintroducing it yields affine variants, two of which appear in Figure~\ref{fig:connections} and an affine DeltaNet is defined analogously.
Whether these affine variants outperform their intercept-free originals, and whether the optimizer-architecture interaction observed here recurs in these variants is a valuable future direction.
Second, the family in Figure~\ref{fig:connections} does not yet include DeltaNet, which should be placed between Linear Attention and MesaNet.
Deriving the nonparametric counterpart of DeltaNet would be a natural extension of the current work.

\section*{Author Contributions}

\emph{Yifei Zuo} conceived the project; developed the mathematics, algorithm, and related derivations; implemented the kernel;
designed and conducted the experiments; and led the writing of the manuscript.
\emph{Dhruv Pai} contributed substantially to the pretraining experiments
and optimizer configuration, and wrote portions of the optimizer-related
discussion.
\emph{Zhichen Zeng} contributed to kernel optimization and authored
portions of the kernel-related sections.
\emph{Alec Dewulf} and \emph{Shuming Hu} contributed valuable discussions to this project. Alec also contributed to the writing of the optimizer-introduction sections.
\emph{Zhaoran Wang} provided advisory guidance throughout the project.

\bibliographystyle{plainnat}
\bibliography{references}
\clearpage
\beginappendix
\section{Theorem}
\label{app:assumptions}

We briefly state the regularity conditions from \citep{zuo2025locallinear} (Appendices A--B).
 
\begin{assumption}[Domain regularity]
\label{asm:domain}
The domain $D \subset \mathbb R^d$ has $C^2$ boundary with principal curvatures uniformly bounded by $\kappa_2$.
\end{assumption}
 
\begin{assumption}[Smoothness]
\label{asm:smooth}
The density $p$ of $X$ satisfies $p \in C^1(D)$ with $p > 0$ on $D$.
The regression function satisfies $f_j \in C^2(D)$ for each output dimension $j$, and the conditional variance $\sigma^2 \in C(D)$.
\end{assumption}
 
\begin{assumption}[Kernel]
\label{asm:kernel}
The kernel $K: \mathbb R^d \to [0,\infty)$ is radial, bounded, and compactly supported on the unit ball $B^d$.
The bandwidth matrix $H = h^2 B$ satisfies $h \to 0$, $nh^d \to \infty$, with condition number $\kappa(B) \le \kappa_1$.
\end{assumption}
 
\begin{assumption}[Boundary gradient]
\label{asm:boundary}
The function $f$ belongs to the class $\mathcal{E}(D, m, M)$: there exists a measurable $\Gamma \subset \partial D$ with positive surface measure such that the inward normal derivative satisfies $|\partial_e f(y)| \ge m$ and the tangential gradient satisfies $\|\nabla_T f(y)\| < M$ for all $y \in \Gamma$, with $m, M$ satisfying a compatibility condition (see \citep{zuo2025locallinear} Definition B.1 and Lemma B.3).
\end{assumption}
 
The $\Omega(1)$ lower bound for the global linear estimator requires only that $f$ is not in the affine class $\mathcal{G} = \{x \mapsto \beta_0 + \beta^\top x\}$ and follows from the projection theorem in $L^2(D)$.
The NW rates follow from pointwise bias--variance analysis with integrated boundary effects (Appendix A of the original).
The LL rates follow from the fact that the local linear estimator achieves $O(\|H\|)$ bias uniformly over $D$, including near $\partial D$, eliminating the boundary-induced $O(\|H\|^{1/2})$ bias of NW. Please refer to the Appendix B of \citep{zuo2025locallinear} for more details.

\section{Additional Derivation of Parallax}
\label{app:derivation_nonnegative_eta}

\subsection{Reformulation of LLA}
\label{app:lla_reformulation}

We derive equation~\eqref{eq:parameterized_lla} from the exact LLA forward in equation~\eqref{eq:lla_forward}.
Starting from
\begin{align}
    \bo_i^{\textsf{LLA}} = \sum_{j\le i}\frac{w_{ij}(1 - t_{ij})}{\omega_i - \bm\mu_i^\top \brho_i}\bv_j,
    \qquad t_{ij} = \brho_i^\top \bz_{ij},
\end{align}
divide the numerator and denominator by $\omega_i = \sum_{j\le i} w_{ij}$, and write $p_{ij} = w_{ij}/\omega_i$ for the softmax weight.
Using $\bm\mu_i = \sum_{j\le i} w_{ij} \bz_{ij}$, we have $\bm\mu_i^\top \brho_i / \omega_i = \E_{\bm p_i}[t_{ij}] = \bar t_i$, so
\begin{align}
    \bo_i^{\textsf{LLA}} = \frac{\E_{\bm p_i}\bigl[(1-t_{ij})\bv_j\bigr]}{1 - \bar t_i}.
    \label{eq:lla_softmax_form}
\end{align}

We expand the numerator. Since $\bz_{ij} = \bk_j - \bq_i$,
\begin{align}
    \E_{\bm p_i}[t_{ij} \bv_j]
    = \E_{\bm p_i}[\bv_j \bz_{ij}^\top]\,\brho_i
    = \bigl(\E_{\bm p_i}[\bv_j \bk_j^\top] - \bar\bv_i \bq_i^\top\bigr)\brho_i.
\end{align}
The cross moment factors as $\E_{\bm p_i}[\bv_j \bk_j^\top] = \bm\Sigma_{KV}^{(i)} + \bar\bv_i \bar\bk_i^\top$, which combined with $\bar\bz_i = \bar\bk_i - \bq_i = \E_{\bm p_i}[\bz_{ij}]$ gives
\begin{align}
    \E_{\bm p_i}[\bv_j \bz_{ij}^\top] = \bm\Sigma_{KV}^{(i)} + \bar\bv_i \bar\bz_i^\top.
\end{align}
Therefore
\begin{align}
    \E_{\bm p_i}[t_{ij} \bv_j]
    = \bm\Sigma_{KV}^{(i)}\brho_i + \bar\bv_i\,(\bar\bz_i^\top \brho_i)
    = \bm\Sigma_{KV}^{(i)}\brho_i + \bar t_i\,\bar\bv_i,
\end{align}
and combining with $\E_{\bm p_i}[\bv_j] = \bar\bv_i = \bo_i^{\textsf{SA}}$ in equation~\eqref{eq:lla_softmax_form},
\begin{align}
    \E_{\bm p_i}\bigl[(1 - t_{ij})\bv_j\bigr]
    = (1 - \bar t_i)\,\bo_i^{\textsf{SA}} - \bm\Sigma_{KV}^{(i)}\brho_i.
\end{align}
Dividing by $1 - \bar t_i$ and using $1/(1 - \bar t_i) = 1 + \eta_i$ recovers the reformulation
\begin{align}
    \bo_i^{\textsf{LLA}}
    = \bo_i^{\textsf{SA}} - (1 + \eta_i)\,\bm\Sigma_{KV}^{(i)}\brho_i.
\end{align}

\subsection{Proof of Proposition~\ref{prop:nonnegative_eta}}
\label{app:proof_nonnegative_eta}

\begin{proof}
We start by decomposing $\bm\Sigma_i$ via the variance identity
\begin{align}
    \sum_{j\le i} w_{ij}\,\bz_{ij}\bz_{ij}^\top
    = \omega_i\,\E_{\bm p_i}[\bz_{ij}\bz_{ij}^\top]
    = \omega_i\,\mathrm{Var}_{\bm p_i}(\bz_{ij}) + \omega_i\,\bar\bz_i \bar\bz_i^\top,
\end{align}
which yields the rank one decomposition
\begin{align}
    \bm\Sigma_i = \bm A_i + \omega_i\,\bar\bz_i \bar\bz_i^\top,
    \qquad
    \bm A_i = \omega_i\,\mathrm{Var}_{\bm p_i}(\bz_{ij}) + \lambda \bm I \succ 0,
    \label{eq:sigma_decomp}
\end{align}
where positive definiteness of $\bm A_i$ follows from $\lambda > 0$.
By the Sherman--Morrison formula,
\begin{align}
    \bm\Sigma_i^{-1}
    = \bm A_i^{-1}
      - \frac{\omega_i\,\bm A_i^{-1} \bar\bz_i \bar\bz_i^\top \bm A_i^{-1}}{1 + \omega_i\,\bar\bz_i^\top \bm A_i^{-1} \bar\bz_i}.
    \label{eq:sherman_morrison}
\end{align}
Define
\begin{align}
    u_i \;:=\; \omega_i\,\bar\bz_i^\top \bm A_i^{-1} \bar\bz_i.
\end{align}
Since $\bm A_i^{-1} \succ 0$, the quadratic form satisfies $u_i \ge 0$, with equality if and only if $\bar\bz_i = \bm 0$.

Using $\bm\mu_i = \omega_i\,\bar\bz_i$ and applying equation~\eqref{eq:sherman_morrison},
\begin{align}
    \brho_i
    = \bm\Sigma_i^{-1}\bm\mu_i
    = \omega_i\,\bm\Sigma_i^{-1}\bar\bz_i
    = \omega_i\,\bm A_i^{-1}\bar\bz_i\,\Bigl(1 - \frac{u_i}{1 + u_i}\Bigr)
    = \frac{\omega_i\,\bm A_i^{-1}\bar\bz_i}{1 + u_i}.
\end{align}
Substituting back into $\bar t_i = \bar\bz_i^\top \brho_i$ gives
\begin{align}
    \bar t_i
    = \frac{\omega_i\,\bar\bz_i^\top \bm A_i^{-1}\bar\bz_i}{1 + u_i}
    = \frac{u_i}{1 + u_i} \in [0, 1),
\end{align}
which immediately implies
\begin{align}
    \eta_i
    = \frac{\bar t_i}{1 - \bar t_i}
    = u_i
    = \omega_i\,\bar\bz_i^\top \bm A_i^{-1} \bar\bz_i \;\ge\; 0.
\end{align}
The final expression equals $\omega_i$ times the squared Mahalanobis distance from $\bq_i$ to the conditional key mean $\bar\bk_i$ under the metric $\bm A_i^{-1}$, justifying the geometric interpretation stated in the main text.
\end{proof}

\section{Parallax Decode Kernel}

\begin{figure}[h]
    \centering
    \begin{subfigure}{\linewidth}
        \centering
        \includegraphics[width=0.7\textwidth]{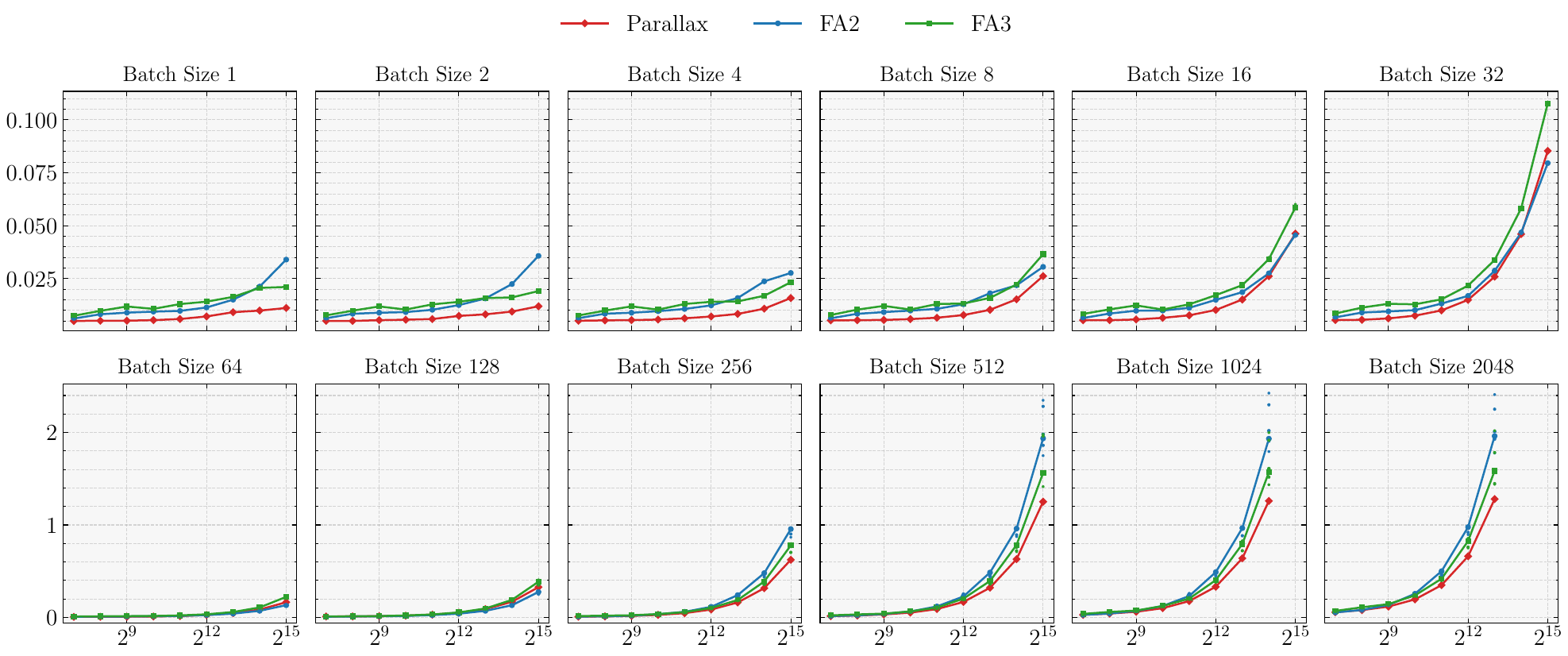}
        \caption{Latency in cuda graph ($d_h=64$).}
        \label{fig:latency-grid-d64-cudagraph}
    \end{subfigure}
    \begin{subfigure}{\linewidth}
        \centering
        \includegraphics[width=0.7\textwidth]{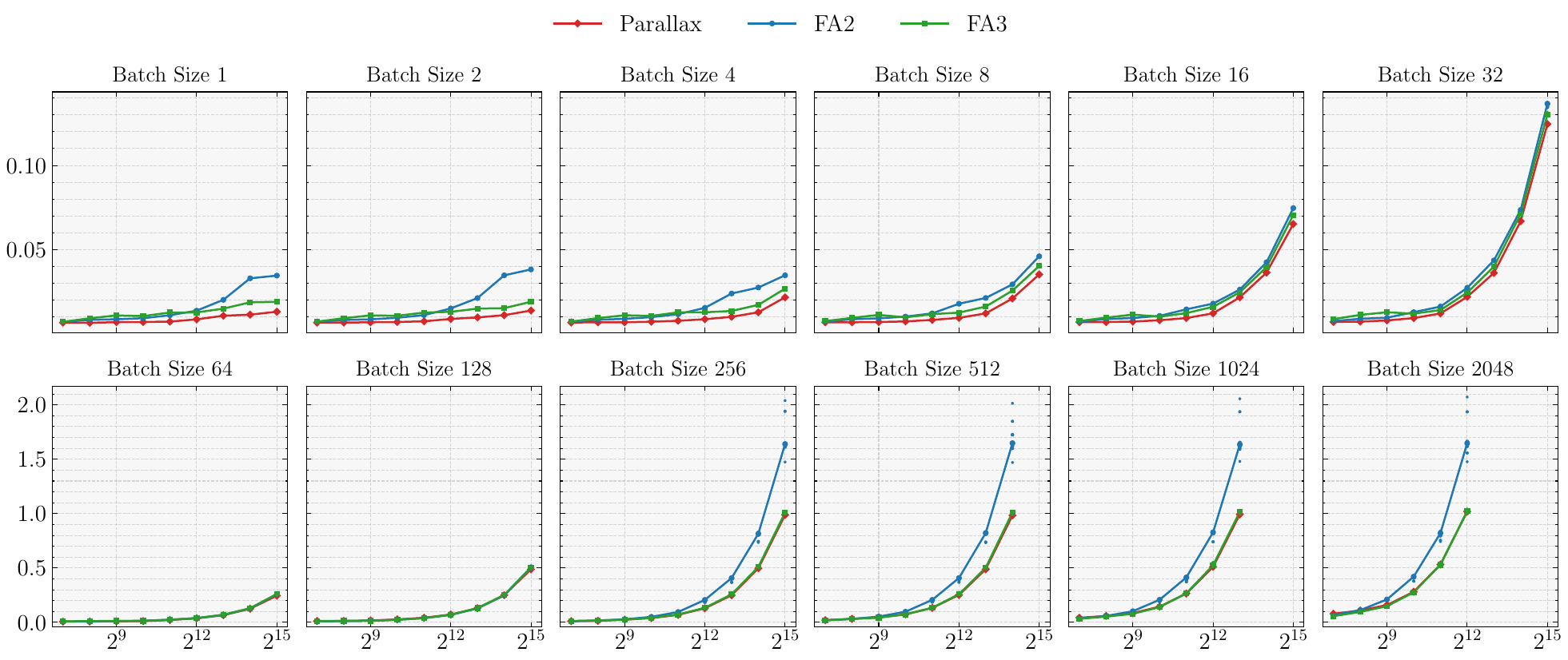}
        \caption{Latency in cuda graph ($d_h=128$).}
        \label{fig:latency-grid-d128-cudagraph}
    \end{subfigure}
    \begin{subfigure}{\linewidth}
        \centering
        \includegraphics[width=0.7\textwidth]{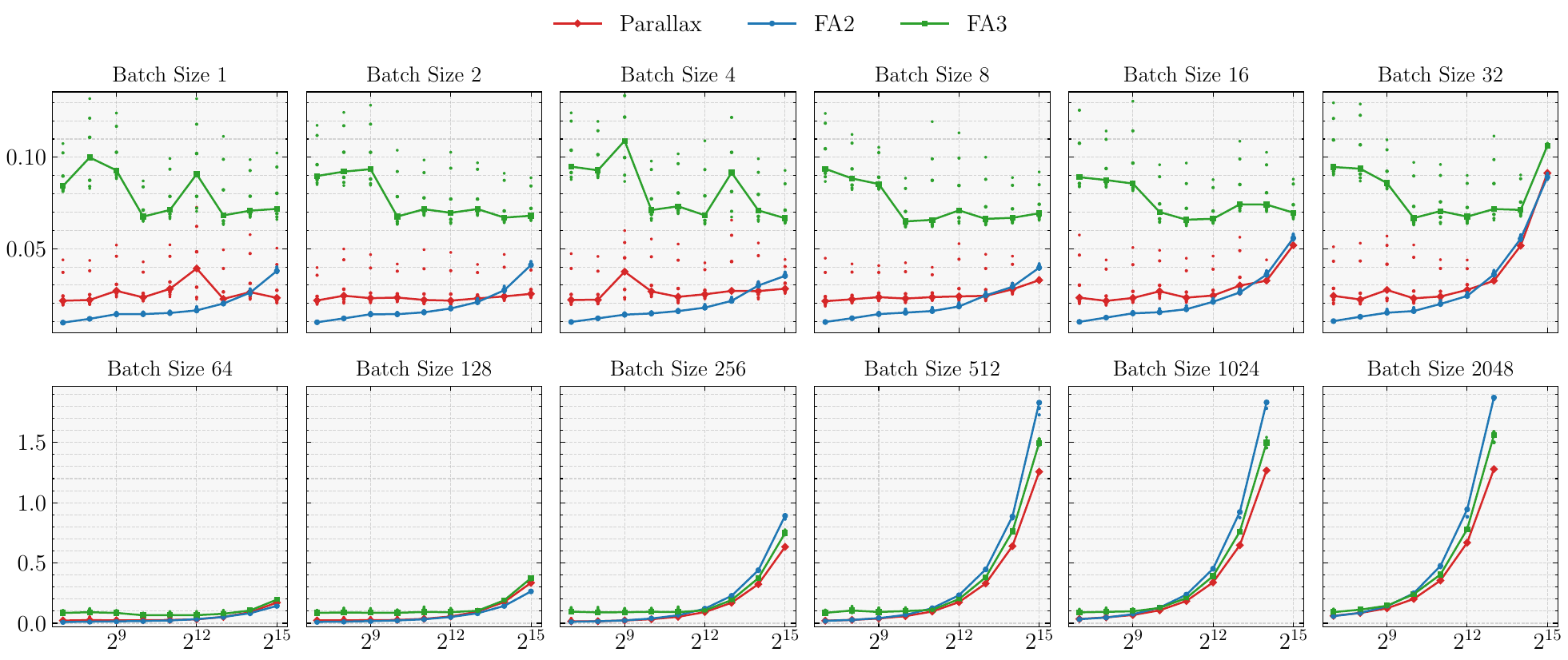}
        \caption{Latency in \texttt{do\_bench} ($d_h=64$).}
        \label{fig:latency-grid-d64-dobench}
    \end{subfigure}
    \begin{subfigure}{\linewidth}
        \centering
        \includegraphics[width=0.7\textwidth]{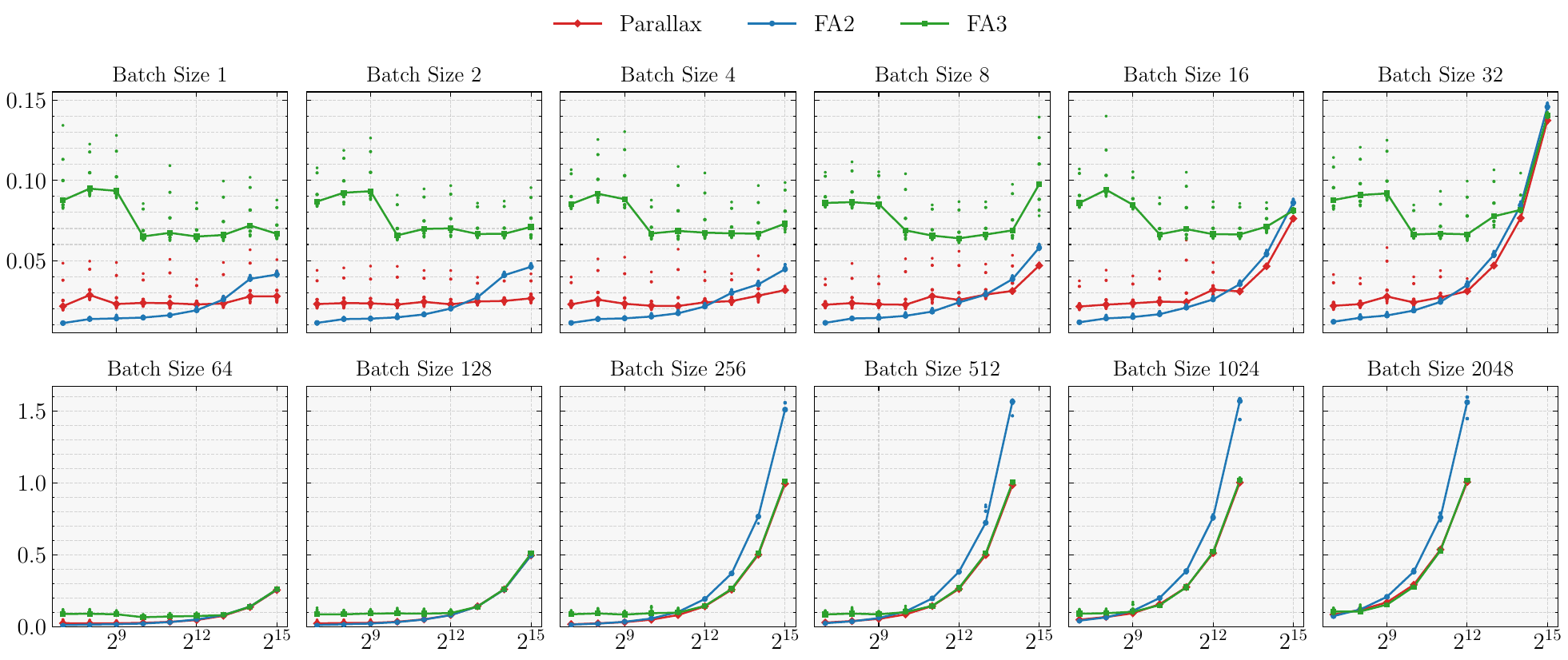}
        \caption{Latency in \texttt{do\_bench} ($d_h=128$).}
        \label{fig:latency-grid-d128-dobench}
    \end{subfigure}
    \caption{Kernel latency comparison. X-axis is the context length and Y-axis is the latency in milliseconds. The latency is measured using a cuda graph (top) and using \texttt{do\_bench} (bottom). The cuda graph measurement isolates the kernel latency, while the \texttt{do\_bench} captures the end-to-end latency including kernel launch overhead.}
    \label{fig:latency-combined}
\end{figure}

\subsection{Kernel Optimization Details}
\label{app:decode_optimization}
This section extends the discussion in Section~\ref{para:decode_optimization} by describing the three major optimizations that optimize the decode latency of Parallax prototype kernel in \texttt{CuTeDSL} on H200:
\begin{enumerate}
    \item \textbf{WGMMA sharing.} Each compute thread array (CTA) loads $\mathbf Q_r$ and $\mathbf R_r$ into a single shared memory tile, with $\mathbf Q_r$ as the first row and $\mathbf R_r$ as the second. The WGMMA then emits $\mathbf S_1$ and $\mathbf S_2$ in Algorithm~\ref{alg:parallax_forward} jointly in the same accumulator. After producing $\mathbf P_1$, we build $\mathbf P_2 = \mathbf P_1 \odot \mathbf S_2$ in registers and stack it with $\mathbf P_1$ in shared memory. The PV WGMMA then emits $\mathbf O_1$ and $\mathbf O_2$ jointly.
    The covariance branch therefore costs one extra row of register accumulators per CTA, with no additional HBM traffic.
    \item \textbf{Persistent split over the KV loop.} Decoding presents only $BH$ query rows, often well below the 132 SMs of an H200 in practical configurations. We launch a persistent grid of $(B, H, S)$ CTAs, where the $S$ number of CTAs share a $(B, H)$ partition the $\lceil L / \mathcal{B}_c \rceil$ tile loop of Algorithm~\ref{alg:parallax_forward}. The split count $S$ is set so that the launch fits one wave on the device and is rounded to a power of two so that the cross split reduction can be vectorized.
    \item \textbf{In-kernel reduction.} Each CTA writes its unnormalized partials $(\mathbf m, \mathbf d_1, \mathbf d_2, \mathbf O_1, \mathbf O_2)$ to a small \texttt{fp32} HBM workspace and atomically increments a per $(B, H)$ counter. The CTA that observes the final increment is elected the merger: it reads the $S$ partials, runs the log-sum-exp rescaling in \texttt{fp32}, evaluates $(1 + \mathbf d_2/\mathbf d_1)\mathbf O_1/\mathbf d_1 - \mathbf O_2/\mathbf d_1$, and writes the output row in the same kernel launch.
    A compile time branch on $S = 1$ skips the workspace round trip and writes the output directly from registers, matching the latency of a single CTA kernel on short context shapes.
\end{enumerate}

\subsection{Additional Profiling Results}
\label{app:profiling}
We provide the raw profiling results in Figure~\ref{fig:speedup-heatmap} on H200 in Figure~\ref{fig:latency-combined}.
The cuda graph measurement isolates the kernel latency, while the \texttt{do\_bench} measurement from \texttt{Triton} captures the end-to-end latency including kernel launch overhead, which FA3 suffers more from.
The heatmaps in Figure~\ref{fig:speedup-heatmap} are derived from the cuda graph measurements, which show a more consistent speedup pattern across different shapes.

\section{Parallax Backward}
\label{app:bwd}

We derive the closed form gradients $\dd\bm{Q}, \dd\bm{R}, \dd\bm{K}, \dd\bm{V}$ of the Parallax forward in equation~\eqref{eq:parallax_expand}, then briefly describe how the gradients can be streamed in the same row tile and column tile structure as the FA backward.

Following Section~\ref{sec:prelim}, let $w_{ij} = \exp(\bq_i^\top \bk_j / h)$, $\omega_i = \sum_{j\le i} w_{ij}$ and $p_{ij} = w_{ij}/\omega_i$. Recall the composite score $t_{ij} = \brho_i^\top \bz_{ij}$ and its softmax weighted mean $\bar t_i = \E_{\bm p_i}[t_{ij}]$. The Parallax forward in equation~\eqref{eq:parallax_expand} admits the equivalent reweighted softmax form
\begin{align}
    \bo_i = \sum_{j\le i} p_{ij}\,(1 + \bar t_i - t_{ij})\,\bv_j,
    \label{eq:bwd_reweighted}
\end{align}
which exposes the two channels through which $\bq_i$ and $\brho_i$ enter the output. The query shapes the softmax weight $p_{ij}$, and the probe modulates the per token coefficient $1 + \bar t_i - t_{ij}$.
Let $\dd\bm{O}_i$ denote the upstream gradient at row $i$. Define three projections of $\dd\bm{O}_i$ and the centered gradient $\delta_{ij}$:
\begin{align}
    \tau_i = \dd\bm{O}_i^\top \bo_i,
    \quad
    \beta_i = \dd\bm{O}_i^\top \bar\bv_i,
    \quad
    a_{ij} = \dd\bm{O}_i^\top \bv_j,
    \quad
    \delta_{ij} = a_{ij} - \beta_i,
    \label{eq:bwd_projections}
\end{align}
where $\bar\bv_i = \mathbb{E}_{\bm p_i}[\bv_j]$ and $\tau_i, \beta_i$ are row scalars that compress the dependence of the loss on the output and the value mean, while $a_{ij}, \delta_{ij}$ resolve the per token contribution.
Define two sets of coefficients for the query and probe channels respectively:
\begin{align}
    g^{(1)}_{ij} = p_{ij}\bigl[a_{ij} - \tau_i + (\bar t_i - t_{ij})\,\delta_{ij}\bigr],\qquad g^{(2)}_{ij} = -p_{ij}\,\delta_{ij}.
\end{align}
Differentiating equation~\eqref{eq:bwd_reweighted} and applying the standard softmax derivative gives
\begin{align}
    \dd\bm{Q}_i &= h^{-1}\sum_{j\le i}g^{(1)}_{ij}\,\bk_j,
    \label{eq:bwd_dq}\\
    \dd\bm{R}_i &= \sum_{j\le i}g^{(2)}_{ij}\,\bk_j,
    \label{eq:bwd_dr}\\
    \dd\bm{K}_j &= h^{-1}\sum_{i\ge j}g^{(1)}_{ij}\,\bq_i + \sum_{i\ge j}g^{(2)}_{ij}\,\brho_i,
    \label{eq:bwd_dk}\\
    \dd\bm{V}_j &= \sum_{i\ge j} p_{ij}\,(1 + \bar t_i - t_{ij})\,\dd\bm{O}_i.
    \label{eq:bwd_dv}
\end{align}
These gradients admit the same row tile and column tile streaming structure as the FA backward. The forward writes out the cache $(\bo_i, \bar\bv_i, \bar t_i, \omega_i, m_i)$ per row, adding only $d + 1$ values over the FA cache.
Due to the difference in reduction direction, the backward kernel is split into two passes:
\begin{itemize}
    \item \emph{Row tile pass. }Loads $(\mathbf{Q}_r, \mathbf{R}_r, \dd\mathbf{O}_r)$ and the cached state, streams over column blocks of $(\mathbf{K}, \mathbf{V})$, and accumulates $\dd\mathbf{Q}_r, \dd\mathbf{R}_r$ in parallel.
    \item \emph{Column tile pass. }Loads $(\mathbf{K}_c, \mathbf{V}_c)$ once and streams over row blocks of $(\mathbf{Q}, \mathbf{R}, \dd\mathbf{O})$ in reverse order to accumulate $\dd\mathbf{K}_c, \dd\mathbf{V}_c$ in parallel.
\end{itemize}
The Parallax backward therefore can be implemented in an I/O-aware streaming algorithm like the FA backward.

\section{Synthetic Experiment Setup}
\label{app:mad-setup}

We follow the pipeline of \citet{pmlr-v235-poli24a} without data modification, swapping only the sequence mixer block.
Every model is a stack of two mixer and SwiGLU MLP blocks with hidden size $d=128$ in \texttt{bf16} precision.
Training proceeds for 60 epochs of Muon with the WSD schedule ($0\%$ warmup, last $20\%$ linearly decayed), matching the optimizer settings in Table~\ref{tab:optim}.
We sweep the peak learning rate over $\{5\times 10^{-3},\,1\times 10^{-3},\,5\times 10^{-4}\}$ and report the best checkpoint per task.
All other settings, including data splits, vocabulary, sequence length, number of KV pairs, and evaluation metric, follow the official release at \url{https://github.com/athms/mad-lab}.
For the harder sweep in Figure~\ref{fig:mad-challenge}, we vary only the data generation parameters: the vocabulary size is raised up to $512$ and the context length up to $2048$ on \texttt{ICR}, \texttt{NCR}, and \texttt{SC}, while all other training hyperparameters remain unchanged.

\section{Pretraining Experiment Setup}
\label{app:experiment_setup}

This appendix complements Section~\ref{sec:exp_lm} and Tables~\ref{tab:train_spec} and~\ref{tab:optim} by reporting the implementation details that are not surfaced inline.
Each training run is conducted on a node consisting of \(8\times\)H100 GPUs.

\subsection{Backbone Architecture}
\label{app:arch}

The Qwen-3 decoder backbone is shared across all language modeling runs.
The full set of hyperparameters of 0.6B model is listed in Table~\ref{tab:arch}. The 1.7B model only doubles the hidden dimension and the SwiGLU MLP hidden size.
Every run ties the input and output embeddings, applies RMSNorm to $\bq$ and $\bk$ vectors, and uses RoPE with base $\theta=10^6$.
For Parallax runs, the additional projection $\bm W_R$ shares the head dimension and head grouping with $\bm W_Q$, and an RMSNorm is applied to $\brho$.
When RoPE is applied to $\brho$, it uses the same base $\theta$ as the queries and keys.

The Transformer$^\dag$ variant raises the query head count while keeping the KV head count fixed, so that its parameter count matches that of Parallax under GQA.
We also train a parameter-matched Transformer that extends the FFN hidden dimension to $3712$ instead of the head count, and find that it performs similarly to Transformer$^\dag$.
For Parallax$^\dag$, the head dimension is halved while the head count remains unchanged.
The reduced parameter count is compensated by extending the FFN hidden dimension to $4480$, so that the total parameter count also matches that of Parallax.

\begin{table}[h]
\centering
\small
\setlength{\tabcolsep}{6pt}
\renewcommand{\arraystretch}{0.95}
\resizebox{\linewidth}{!}{%
\begin{tabular}{lcccccccccc}
\toprule
\textbf{Config} & \textbf{Hidden} & \textbf{FFN} & \textbf{Head dim} & \textbf{Q head} & \textbf{KV head} & \textbf{Layers} & \textbf{Vocab} & \textbf{RoPE} & \textbf{Embed} & \textbf{QKNorm} \\
\midrule
Transformer        & 1024 & 3072          & 128         & 16           & 8 & 28 & 152k & $10^6$ & Tied & RMSNorm \\
Parallax           & 1024 & 3072          & 128         & 16           & 8 & 28 & 152k & $10^6$ & Tied & RMSNorm \\
Transformer$^\dag$ & 1024 & 3072          & 128         & \textbf{24}  & 8 & 28 & 152k & $10^6$ & Tied & RMSNorm \\
Parallax$^\dag$    & 1024 & \textbf{4480} & \textbf{64} & 16           & 8 & 28 & 152k & $10^6$ & Tied & RMSNorm \\
\bottomrule
\end{tabular}}
\vspace{2pt}
\caption{Backbone hyperparameters of the 0.6B model. Differences from the Transformer baseline are bolded.}
\label{tab:arch}
\end{table}

\subsection{Optimizer and Scheduler}
\label{app:optim-detail}

Both optimizers apply gradient norm clipping at $1.0$.
The Muon implementation uses five Newton--Schulz iterations with the standard quintic coefficient and spectral scaling.
Parameters that are not orthogonalizable, including embeddings, norms, and biases, fall back to an Adam style scalar update with $(\beta_1,\beta_2)=(0.8,0.95)$ and $\varepsilon=10^{-7}$.
The AdamW implementation uses $\varepsilon=10^{-8}$ with decoupled weight decay~\citep{loshchilov2019decoupled}.
All language modeling runs train for $20{,}000$ optimizer steps.
The 1.7B runs double the global batch size relative to the 0.6B runs (Table~\ref{tab:train_spec}), so that the total token count scales to approximately $157.2$\,B at the same step count.

\subsection{Precision and Parallelism}
\label{app:sys}

All runs use fully sharded data parallelism without tensor, context, or pipeline parallelism in H100.
We apply \texttt{torchao} dynamic \texttt{fp8} to all linear layers except the LM head, which remains in \texttt{bf16} for numerical stability.

\section{Additional Experiment Results}
\label{app:additional_results}

\begin{figure}[h]
    \centering
    \begin{subfigure}{\linewidth}
        \centering
        \includegraphics[width=0.75\textwidth]{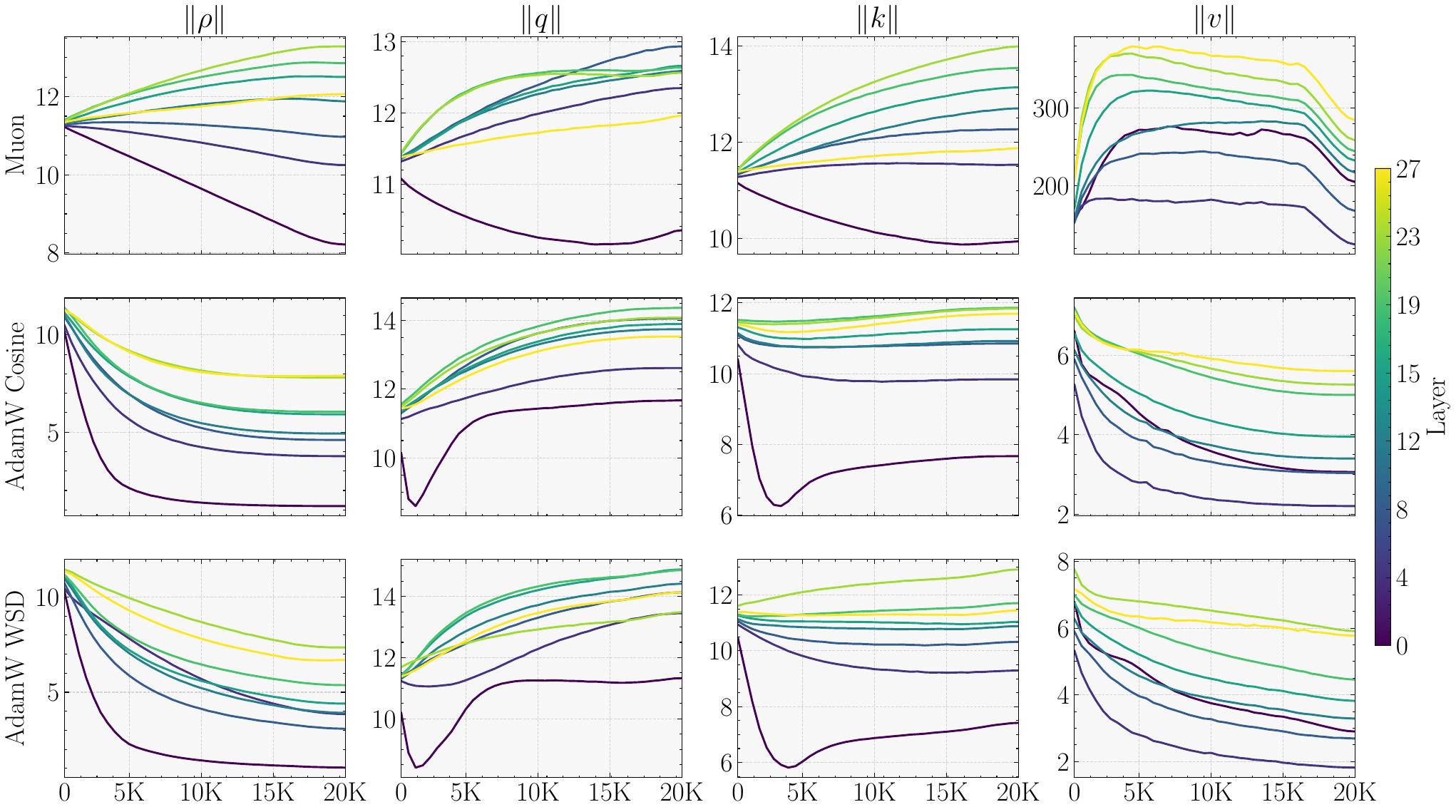}
        \caption{Evolution of the activation norms of $\bq$, $\bk$, $\bv$, and $\brho$ in the 0.6B Parallax models.}
        \label{fig:norm-evolve-qkvr}
    \end{subfigure}\\[0.6em]
    \begin{subfigure}{\linewidth}
        \centering
        \includegraphics[width=0.78\textwidth]{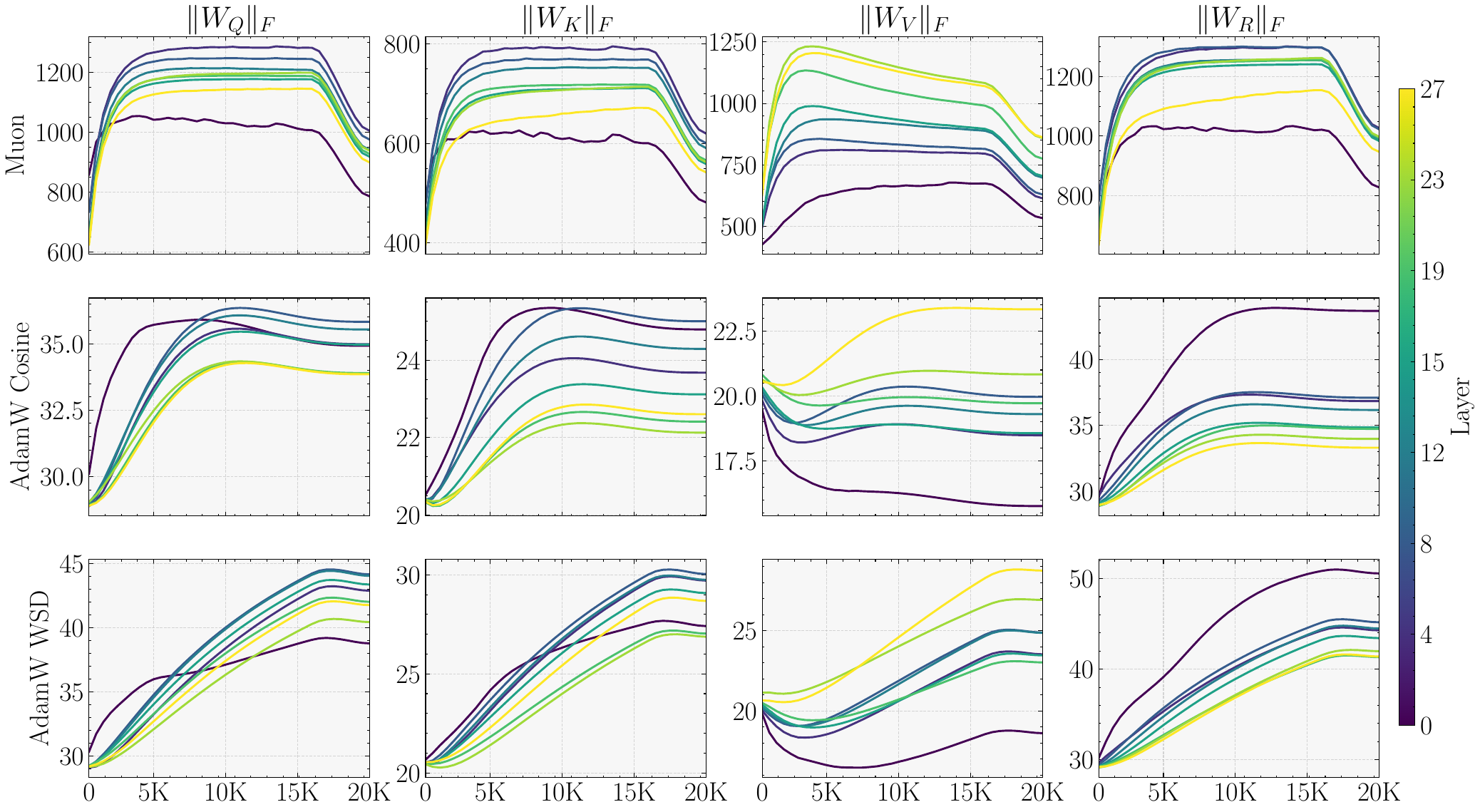}
        \caption{Evolution of the norms of $\bm W_Q$, $\bm W_K$, $\bm W_V$, $\bm W_O$, and $\bm W_R$ in the 0.6B Parallax models.}
        \label{fig:weight-norm-evolve}
    \end{subfigure}
    \caption{Training dynamics of activation norms (top) and projection weight norms (bottom).}
    \label{fig:norm-evolve-combined}
\end{figure}

\subsection{Training Dynamics}
\label{app:training_dynamics}

We track three metrics throughout the 0.6B Parallax pretraining run and report their evolution under different optimizer configurations.
Figure~\ref{fig:norm-evolve-qkvr} reports the layerwise activation norms of $\bq$, $\bk$, $\bv$, and $\brho$ vectors.
Figure~\ref{fig:weight-norm-evolve} reports the Frobenius norms of the projection weights $\bm W_Q$, $\bm W_K$, $\bm W_V$, $\bm W_O$, and $\bm W_R$.
Figure~\ref{fig:corr-ratio-combined} reports the correction to output ratio \texttt{COR} averaged over the sequence dimension.

Across all three diagnostics, Muon and AdamW exhibit qualitatively similar trajectories during the early phase, after which Muon continues to grow while AdamW saturates.
The activation norms $\|\bv\|$ and $\|\brho\|$ show the largest separation between optimizers.
The \texttt{COR} curves confirm that the correction branch is opened progressively under Muon and reaches its highest values in the deepest layers, whereas it remains largely suppressed under AdamW throughout training.
Together, these dynamics provide a temporal view of the optimizer-architecture interaction analyzed in Section~\ref{sec:analysis}.

\subsection{Advantage Shrinkage During Decay}
\label{app:decay}

\begin{figure}
    \centering
    \begin{subfigure}[c]{0.3\textwidth}
        \centering
        \includegraphics[width=\linewidth]{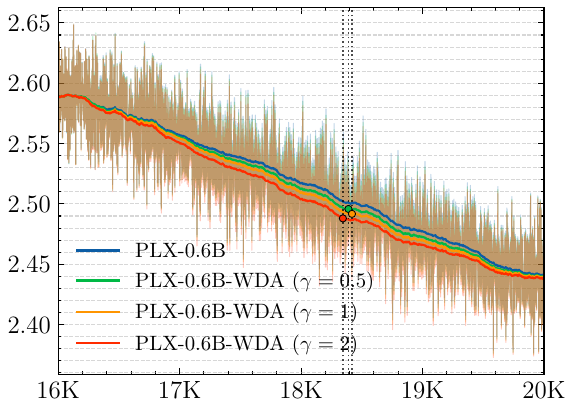}
        \caption{Training loss with WDA.}
        \label{fig:wda}
    \end{subfigure}
    \begin{subfigure}[c]{0.3\textwidth}
        \centering
        \includegraphics[width=\linewidth]{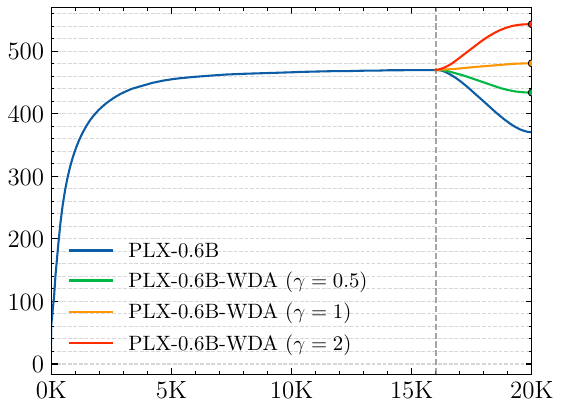}
        \caption{$\|\bm{W}_{R}\|_F$ evolution.}
        \label{fig:wda-norm}
    \end{subfigure}
    \caption{
        Figure~\ref{fig:wda} shows the training loss of Parallax with WDA during the decay stage. The turnaround point where the WDA variants start to lose their advantage is annotated.
        Figure~\ref{fig:wda-norm} shows the $\|\bm{W}_{R}\|_F$ evolution with WDA for the layer 18. WDA effectively mitigates the weight norm shrinkage.
    }
\end{figure}

Figure~\ref{fig:loss-curve-combined} shows the training curves of the 0.6B and 1.7B models under Muon with the WSD schedule, where the advantage of Parallax over the Transformer baseline is most pronounced.
However, the advantage of Parallax shrinks during the final linear decay phase of the WSD schedule.
The training dynamics in Figure~\ref{fig:weight-norm-evolve} shows that the weight norms shrink throughout the decay phase, which may partially explain the shrinkage of the Parallax advantage.

\paragraph{Weight decay annealing.}To investigate if the shrinkage of the Parallax advantage is related to the weight norm shrinkage, we run an additional ablation with weight decay annealing (WDA).
Concretely, let $t \in [0, 1]$ denote the fractional progress through the decay stage. WDA replaces the constant weight decay coefficient $\lambda$ with a schedule
\begin{align}
\lambda(t) = \lambda \cdot (1 - t)^\gamma,
\end{align}
where $\gamma \geq 0$ controls how aggressively the weight decay is annealed.
The choice $\gamma = 0$ recovers the standard WSD recipe.
For $\gamma > 0$, weight decay decreases over the course of the decay stage, with $\gamma = 1$ giving linear annealing and $\gamma = 2$ giving quadratic annealing that suppresses weight decay more aggressively toward the end of training.

\paragraph{Results.}
We apply WDA at 0.6B scale with $\gamma \in \{0.5, 1, 2\}$ and otherwise identical hyperparameters to the standard Muon with WSD run.
Figure~\ref{fig:wda-norm} confirms that WDA effectively mitigates the weight norm shrinkage, with larger $\gamma$ giving higher final norms of $\bm W_R$.
Figure~\ref{fig:wda} shows the training loss across the full decay window.
All three WDA variants reach lower final training loss than the standard recipe, and the improvement is monotone in $\gamma$ over the range tested where the largest annealing strength $\gamma = 2$ gives the largest gain.

The curves diverge gradually, with the gap widening throughout the first half of the decay stage.
However, the gap shrinks during the second half of the decay stage, with the final loss values converging to a narrow range.
We annotate the turnaround point where the WDA variants start to lose their advantage in the figure, which occurs at similar step counts across all three WDA variants.

We speculate that this late convergence is a consequence of how WDA affects the effective step size in parameter space.
The relative magnitude of a parameter update can be measured by the size of the step relative to the scale of the parameter itself:
\begin{align}
    \Delta_t = \frac{\|\eta_t \nabla_{\bm{W}_t}\mathcal L_t\|_F}{\|\bm{W}_t\|_F}.
\end{align}
Under standard WSD, both the numerator and the denominator shrink together over the decay stage, partially preserving the relative step size.
The shrinkage of denominator $\|\bm{W}_t\|_F$ is held by WDA, so as $\eta_t \to 0$ the relative step size collapses faster than under WSD.
WDA thus reduces the effective progress late in training.

\paragraph{Implications.}
WDA produces a clear gain in training performance, confirming that weight norm shrinkage is a real, mechanistic contributor to the decay stage advantage erosion, not an artifact of measurement.
This is a preliminary result that we report as evidence that the Muon with WSD recipe is not optimal for Parallax in its current form, and WDA only partially mitigates the issue.

\section{Parallax Score Visualizations}
\label{app:attention_maps}

\begin{figure}[h]
    \centering
    \begin{subfigure}{\linewidth}
        \centering
        \includegraphics[width=0.9\textwidth]{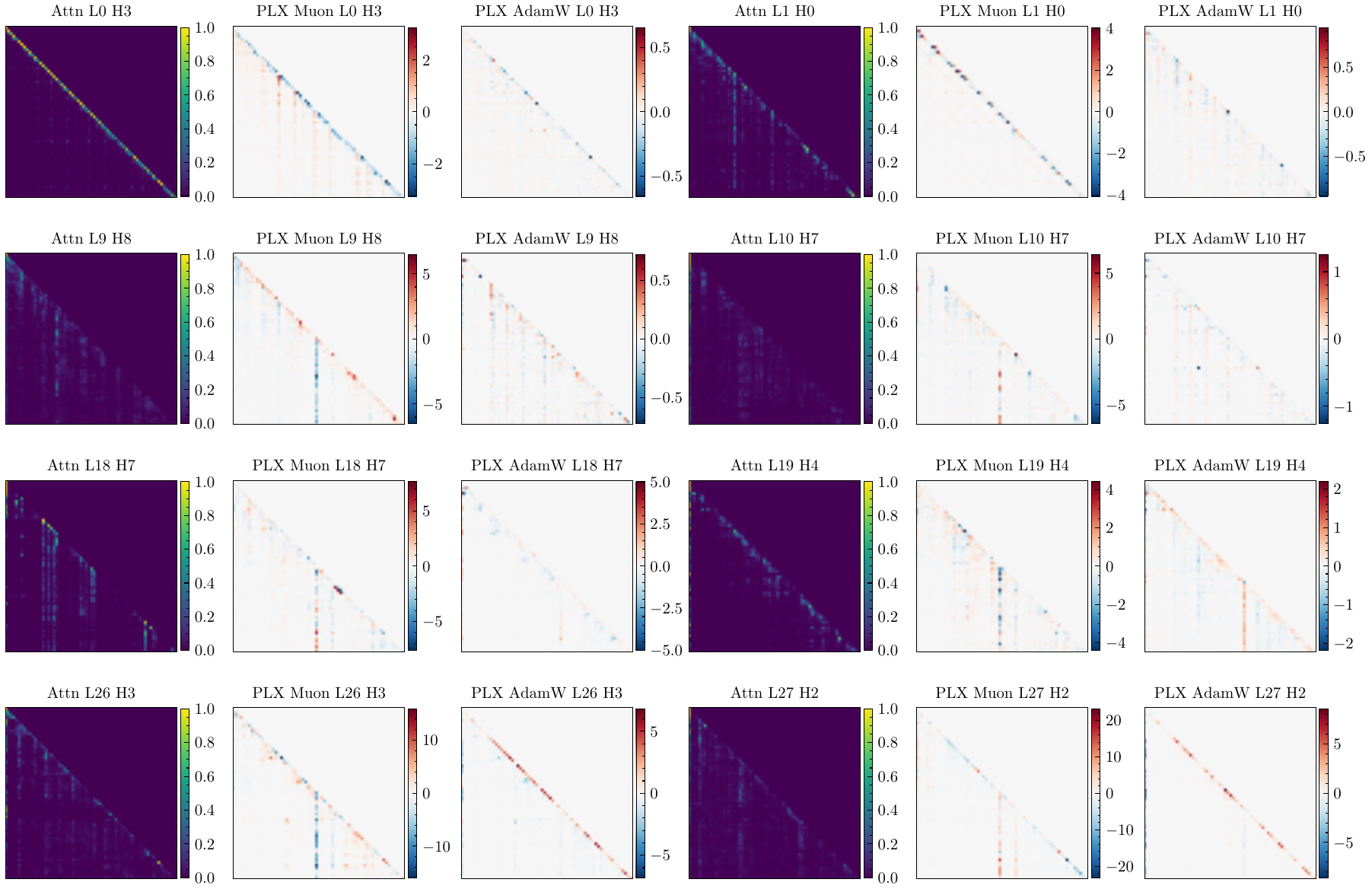}
        \caption{Score maps top left.}
        \label{fig:attn-score-top-left}
    \end{subfigure}\\[0.6em]
    \begin{subfigure}{\linewidth}
        \centering
        \includegraphics[width=0.9\textwidth]{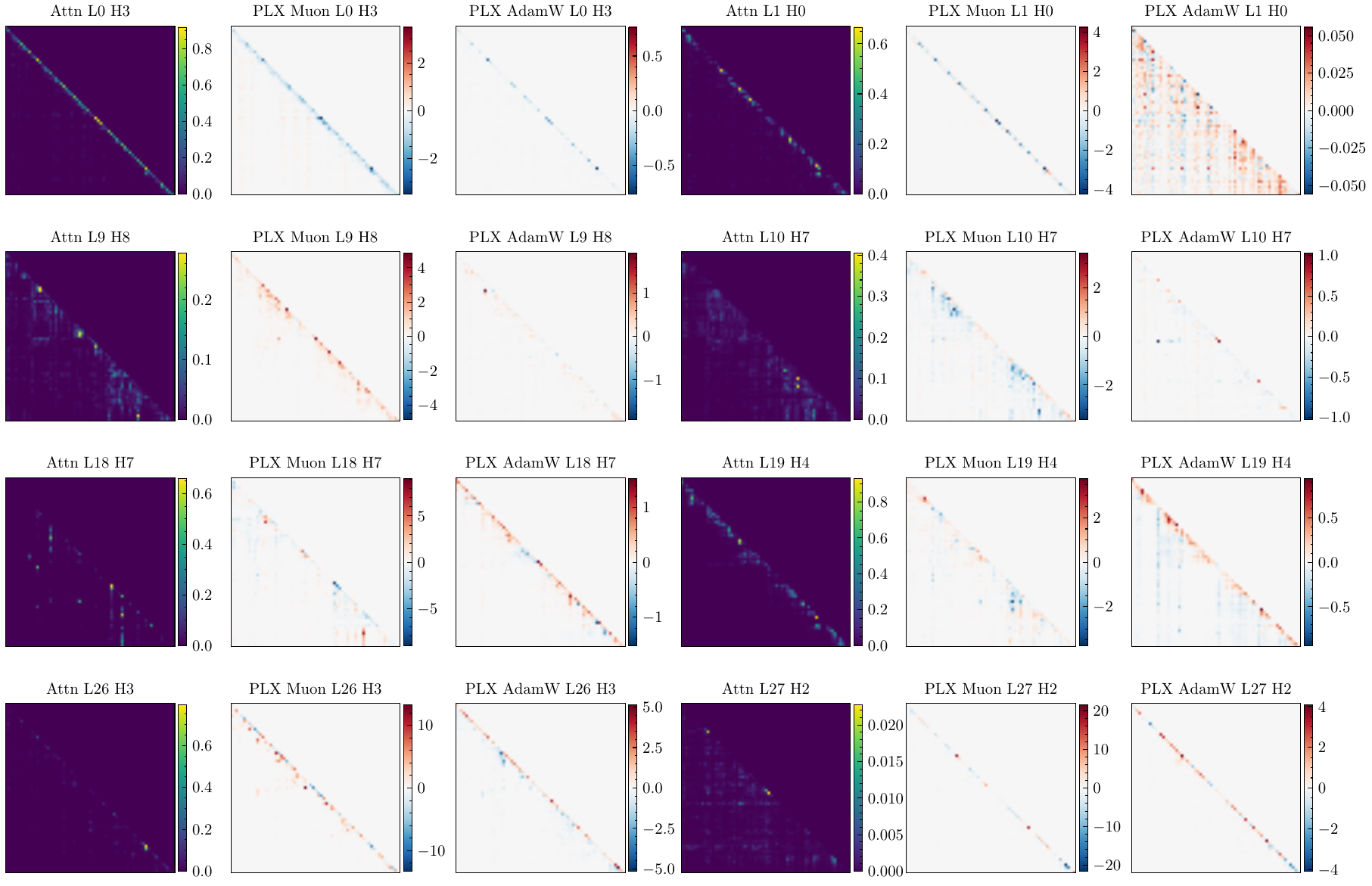}
        \caption{Score maps bottom right.}
        \label{fig:attn-score-bottom-right}
    \end{subfigure}
    \caption{Score maps of the Transformer and Parallax: top-left corner (top) and bottom-right corner (bottom).}
    \label{fig:attn-score-combined}
\end{figure}

To complement the aggregate score statistics in Section~\ref{sec:analysis}, we visualize the attention score maps of the Transformer baseline and Parallax.
Figure~\ref{fig:attn-score-top-left} shows the top left corner of the attention map, while Figure~\ref{fig:attn-score-bottom-right} shows the bottom right corner.
Each block contains $64\times 64$ tokens, and the input sequence is sampled from the pretraining data with sequence length $1024$.
The visualization of Parallax AdamW uses the WSD scheduler.

\end{document}